\documentclass{article}[10pt]







\usepackage[utf8]{inputenc}


\usepackage{microtype}
\usepackage{graphicx}
\usepackage{subfigure}
\usepackage{booktabs} 

\usepackage{dsfont}

\usepackage{stackrel}
\usepackage{mathtools}
\usepackage{amsfonts}

\usepackage{bbm}

\usepackage{tikz}
\usetikzlibrary{positioning}
\usetikzlibrary{decorations.text}
\usetikzlibrary{arrows}
\usetikzlibrary{shapes}

\usepackage{pgfplots}
\pgfplotsset{width=10cm,compat=1.9}

\usepackage{amssymb,amsmath}

\usepackage{hyperref}


\usepackage{xcolor}

\newcommand{\cA}[0]{\mathcal{A}}
\newcommand{\cC}[0]{\mathcal{C}}
\newcommand{\cS}[0]{\mathcal{S}}
\newcommand{\expect}[1]{\mathbb{E}\left[ #1 \right]}

\newcommand{\alice}[0]{\textit{alice}}
\newcommand{\bob}[0]{\textit{bob}}
\newcommand{\eve}[0]{\textit{eve}}
\newcommand{\sm}[0]{\textit{sm}}
\newcommand{\fr}[0]{\textit{fr}}

\allowdisplaybreaks

\title{VC-Dimension Based Generalization Bounds for Relational Learning}

\author{Ond\v{r}ej Ku\v{z}elka\thanks{Department of CS, KU Leuven, Belgium,
Email: ondrej.kuzelka@kuleuven.be} \and Yuyi Wang\thanks{DISCO Group, ETH Zurich, Switzerland, Email: yuwang@ethz.ch} \and Steven Schockaert\thanks{School of CS \& Informatics, Cardiff University, UK, Email: SchockaertS1@cardiff.ac.uk}}

\date{}

%

\usepackage{amsthm}

\newif\ifappendix
\appendixfalse

\newtheorem{example}{Example}
\newtheorem{definition}{Definition}
\newtheorem{theorem}{Theorem}
\newtheorem{lemma}{Lemma}
\newtheorem{observation}{Remark}

\begin{document}

\maketitle

\begin{abstract}
In many applications of relational learning, the available data can be seen as a sample from a larger relational structure (e.g. we may be given a small fragment from some social network). In this paper we are particularly concerned with scenarios in which we can assume that (i) the domain elements appearing in the given sample have been uniformly sampled without replacement from the (unknown) full domain and (ii) the sample is complete for these domain elements (i.e. it is the full substructure induced by these elements). Within this setting, we study bounds on the error of sufficient statistics of relational models that are estimated on the available data. As our main result, we prove a bound based on a variant of the Vapnik-Chervonenkis dimension which is suitable for relational data.
\end{abstract}

\section{Introduction}

In one of the most common settings in statistical relational learning (SRL), we are given a fragment of a relational structure (i.e.\ a {\em training example}) from which we want to learn a model for making predictions about the unseen parts of the structure. For example, the relational structure could correspond to a large social network and the training example to a fragment of the social network specifying the relationships that hold among a small sample of the users, along with their attributes. Clearly, in order to provide any guarantees on the accuracy of these predictions, we need to make (simplifying) assumptions about how the training structures are obtained. In this paper, we follow the setting from \cite{kuvzelka2017induction,kuzelka2018relational}, where it is assumed that these structures are all obtained as fragments induced by domain elements sampled uniformly without replacement.




The specific problem that we consider in this paper is to bound the error that we make when estimating probabilities of first-order theories from the training example, or more specifically, the probability that a first-order theory $\Phi$ is satisfied in a small randomly sampled fragment of the relational structure.
While this setting has already been studied in \cite{kuvzelka2017induction,kuzelka2018relational,kuzelka2018arxivPACReasoning},
one important remaining problem, which will be the focus of this paper, relates to how the theory $\Phi$ is obtained. Typically, $\Phi$ is chosen from some hypothesis class, based on the same training example that is used to estimate its probability. The bounds that were derived in \cite{kuzelka2018arxivPACReasoning} for such cases depend on the size of this hypothesis class. Unfortunately, this can quickly lead to vacuous bounds in many cases. In fact, in some applications, the most natural hypothesis classes are either infinite or so large that they are effectively infinite for all practical purposes. This is the case, for instance, whenever we want to use constructs involving numerical expressions. 
To address this issue, in this paper we derive bounds which depend on the VC-dimension of the hypothesis class, instead of its size. In this way, we can also obtain, in many cases, tighter bounds than the ones we derived in \cite{kuzelka2018arxivPACReasoning}. To the best of our knowledge, the bounds we introduce in this paper are the first VC-dimension based bounds for relational learning problems.


\section{Preliminaries}

In this paper we consider function-free language $\mathcal{L}$, which is built from a finite set of constants $\textit{Const}$, a set of variables $\textit{Var}$ and a set of predicates $\textit{Rel} = \bigcup_i \textit{Rel}_i$, where $\textit{Rel}_i$ contains the predicates of arity $i$. Throughout this paper we assume that the sets $\textit{Const}$, $\textit{Var}$ and $\textit{Rel}$ are fixed.
For $a_1,...,a_k \in \textit{Const}\cup \textit{Var}$ and $R \in \textit{Rel}_k$, we call $R(a_1,...,a_k)$ an {\em atom}.  If $a_1,..,a_k\in \textit{Const}$, this atom is called {\em ground}. A {\em literal} is an atom or its negation. A formula is called {\em closed} if all variables are bound by a quantifier.
Note that although the set $\textit{Const}$ is required to be finite, it can have arbitrary size, so that we could, for instance, represent all 64-bit floating point numbers. From an application point of view, this allows us to consider formulas involving numerical expressions. For example, we could have a predicate \textit{Sum}, whose intended meaning is that $\textit{Sum}(x,y,z)$ holds iff $z=x+y$ where $+$ represents floating-point addition.


{\subsection{Relational Learning Setting}}\label{secPreliminariesExamples}

\subsubsection{{Relational examples}}

The learning setting considered in this paper follows the one that was introduced in \cite{kuvzelka2017induction,kuzelka2018relational}. The central notion is that of a {\em relational example} (or simply \textit{example} if there is no cause for confusion),
which is defined as a pair $(\mathcal{A},\mathcal{C})$, with $\mathcal{C}$ a set of constants and $\mathcal{A}$ a set of ground atoms which only use constants from $\mathcal{C}$. A relational example is intended to provide a complete description of a possible world, hence any ground atom over $\mathcal{C}$ which is not contained in $\mathcal{A}$ is implicitly assumed to be false. Note that this is why we have to explicitly specify $\mathcal{C}$, as opposed to simply considering the set of constants appearing in $\mathcal{A}$. For instance, the relational example $(\{\textit{sm}(alice)\},\{\textit{alice}\})$ is different from $(\{\textit{sm}(alice)\},\{\textit{alice},\textit{bob}\})$, as in the latter case we know that \textit{bob} does not smoke (i.e.\ the atom $\textit{sm}(bob)$ is known to be false since it is not specified to be true) whereas in the former case we have no knowledge about \textit{bob}. We denote by $\Omega(\mathcal{L},k)$ the set of all possible relational examples $\Upsilon = (\cA,\cC)$ where $\cA$ {only contains ground atoms} from $\mathcal{L}$ and $|\cC| = k$.

\begin{example}
Let us assume that the only predicate in $\mathcal{L}$ is $\textit{sm}/1$ and the only constant in is $\textit{alice}$. Then $\Omega(\mathcal{L},1) = \{ (\textit{sm}(\textit{alice}),\{\textit{alice}\}), (\emptyset,\{\textit{alice}\}) \}$.
\end{example}

\noindent Let $\Upsilon = (\mathcal{A},\mathcal{C})$ be a relational example and $\mathcal{S}\subseteq \mathcal{C}$.  The fragment $\Upsilon\langle \mathcal{S} \rangle = (\mathcal{B},\mathcal{S})$ is defined as the restriction of $\Upsilon$ to the constants in $\mathcal{S}$, i.e.\ $\mathcal{B}$ is the set of all atoms from $\mathcal{A}$ which only contain constants from $\mathcal{S}$.

\begin{example}\label{ex1.1}
Let
\begin{align*}
    \Upsilon = (\{ \fr(\alice,\bob),\fr(\bob,\alice),\fr(\bob,\allowbreak\eve),\fr(\eve,\bob),\sm(\alice) \}, \\
    \{\alice, \bob, \eve \}),
\end{align*}
i.e.\ the only smoker is $\textit{alice}$ and the friendship structure is:
\begin{center}
\resizebox{0.315\textwidth}{!}{
\tikzset{
main node/.style={ellipse,fill=white!11,draw,minimum size=0.3cm,inner sep=0pt},
other node/.style={rectangle,fill=white!11,minimum size=0.3cm,inner sep=0pt},
}
\tikzset{edge/.style = {->,> = latex'}}
\begin{tikzpicture}

\node[main node] (1) {alice};
\node[main node] (2) [right = 0.5cm of 1] {bob};
\node[main node] (3) [right = 0.5cm of 2] {eve};

\draw[edge] (1) [bend right] to (2);
\draw[edge] (2) [bend right] to (1);
\draw[edge] (3) [bend right] to (2);
\draw[edge] (2) [bend right] to (3);
\end{tikzpicture}}
\end{center}
Then $\Upsilon\langle \{ \alice, \bob \} \rangle$ ${=}$ $(\{\sm(\alice),$ $\fr(\alice,\bob),$ $\fr(\bob,\alice) \},$ $\{\alice,\bob\})$.
\end{example}

\noindent In the considered setting, we are given a single relational example $\Upsilon = (\mathcal{A},\mathcal{C})$, and this example is assumed to have been sampled from a larger relational example $\aleph = (\cA_\aleph,\cC_\aleph)$. The intended meaning is that $\aleph$ covers the entire domain which we would like to model and $\Upsilon$ is the fragment of the domain which is known at training time. Throughout this paper, we will assume that $\cC_\aleph$ is finite.
As in \cite{kuzelka2018relational,kuzelka2018arxivPACReasoning} we assume that $\Upsilon$ as sampled from $\aleph$ by the following process.

\begin{definition}[Sampling from a global example]\label{def:sampling_setting}
Let $\aleph = (\cA_\aleph,\cC_\aleph)$ be a relational example called the {\em global example}. Let $n \in \mathbb{N}\setminus\{0\}$ and let $\textit{Unif}(\cC_\aleph,n)$ denote uniform distribution on size-$n$ subsets of $\cC_\aleph$. Training relational examples $\Upsilon$ are sampled from the global example $\aleph$ by first sampling $\cC_\Upsilon \sim \textit{Unif}(\cC_\aleph,n)$ and defining $\Upsilon = \aleph \langle \cC_\Upsilon \rangle$.
\end{definition}


\subsubsection{Probabilities of formulas}
In a given relational example, any closed formula $\alpha$ is classically either true or false. To assign probabilities to formulas in a meaningful way, considering that we typically only have a single relational example available for training, we can consider how often the formula is satisfied in small fragments of the given relational example.

\begin{definition}[Probability of a formula \cite{kuzelka2018relational}]\label{def:probability_of_formula}
Let $\Upsilon = (\mathcal{A},\mathcal{C})$ be a relational example and $k\in \mathbb{N}$.
The probability of a closed formula $\alpha$ is defined as follows\footnote{We will use $Q$ for probabilities of formulas as defined in this section, to avoid confusion with other ``probabilities'' we deal with in the text.}:
$$Q_{\Upsilon,k}(\alpha) = P_{\mathcal{S} \sim \textit{Unif}(\cC,k)}\left[ \Upsilon\langle \mathcal{S} \rangle \models \alpha \right]$$
where $\textit{Unif}(\cC,k)$ denotes uniform distribution on size-$k$ subsets of $\cC$.
\end{definition}

\noindent Clearly $Q_{\Upsilon,k}(\alpha) = \frac{1}{|\mathcal{C}_k|} \cdot \sum_{\mathcal{S} \in \cC_k} \mathds{1}(\Upsilon\langle \mathcal{S} \rangle \models \alpha)$ where $\mathcal{C}_k$ is the set of all size-$k$ subsets of $\cC$.
The above definition can straightforwardly be extended to probabilities of sets of formulas (which we will also call {\em theories} interchangeably): if $\Phi$ is a set of formulas, we then have
$Q_{\Upsilon,k}(\Phi) = Q_{\Upsilon,k}(\bigwedge \Phi)$ where $\bigwedge \Phi$ denotes the conjunction of all formulas in $\Phi$.

\begin{example}
Let $\textit{sm}/1$ be a unary predicate denoting that someone is a smoker, e.g. $\textit{sm}(\textit{alice})$ means that $\alice$ is a smoker. Let us consider the following example:
\begin{align*}
\Upsilon = (\{ \textit{fr}(\textit{alice}, \textit{bob}), \textit{sm}(\textit{alice}), \textit{sm}(\textit{eve}) \},
    \{\textit{alice}, \textit{bob}, \textit{eve} \}),
\end{align*}
and formulas $\alpha = \forall X : \textit{sm}(X)$ and $\beta = \exists X,Y : \textit{fr}(X,Y)$. Then, for instance, $Q_{\Upsilon,1}(\alpha) = 2/3$, $Q_{\Upsilon,2}(\alpha) = 1/3$ and $Q_{\Upsilon,2}(\beta) = 1/3$.
\end{example}

\noindent It is not difficult to check that under the sampling assumption from Definition \ref{def:sampling_setting}, for any theory $\Phi$ it holds that $Q_{\aleph,k}(\Phi) = \mathbb{E}_{\Upsilon}\left[ Q_{\Upsilon,k}(\Phi) \right]$ \cite{kuzelka2018relational}. 

\subsubsection{Representing theories as functions}

By definition, to compute $Q_{\Upsilon,k}(\Phi)$, we only need to know for which of the elements of $\Omega(\mathcal{L},k)$ it holds that $\Phi$ is satisfied. To make this view explicit, we will formulate the results in this paper in terms of functions from $\Omega(\mathcal{L},k)$ to $\{0,1 \}$. For a given theory $\Phi$, the associated function $f_{\Phi}$ is defined for $\Gamma\in \Omega(\mathcal{L},k)$ as $f_{\Phi}(\Gamma)=1$ if $\Gamma\models\Phi$ and $f_{\Phi}(\Gamma)=0$ otherwise. The advantage of this formulation is that our results then directly apply to settings where other representation frameworks than classical logic are used for representing the theory. For example, a theory could be implicitly represented by a neural network
with a hard-thresholding output unit. For notational convenience, we also write $\Gamma \models f$ if $f(\Gamma) = 1$. We then naturally extend the definition of $Q_{\Upsilon,k}$ to functions: $Q_{\Upsilon,k}(f) = P_{\mathcal{S} \sim \textit{Unif}(\cC,k)}\left[\Upsilon\langle \mathcal{S} \rangle \models f  \right] = P_{\mathcal{S} \sim \textit{Unif}(\cC,k)}\left[ f(\Upsilon\langle \mathcal{S} \rangle) = 1  \right].$

\subsection{VC-Dimension}\label{sec:vc}

The next definition describes the classical notion of VC-dimension \cite{vapnik_book}, specialized to our relational learning setting
that is used throughout this paper to measure {the} complexity of hypothesis classes.

\begin{definition}[VC-dimension]
Let $k$ be a positive integer and let $\mathcal{H}$ be a hypothesis class of functions $f : \Omega(\mathcal{L},k) \rightarrow \{0,1\}$. Let $\mathcal{X} = \{ \Upsilon_1, \Upsilon_2,\dots,\Upsilon_d \}\subseteq \Omega(\mathcal{L},k)$. We say that $\mathcal{H}$ shatters $\mathcal{X}$ if for every $\mathcal{Y} \subseteq \mathcal{X}$, there is $f \in \mathcal{H}$ such that $f(\Upsilon) = 1$ for all $\Upsilon \in \mathcal{Y}$ and $f(\Upsilon) = 0$ for all $\Upsilon \in \mathcal{X}\setminus \mathcal{Y}$. The VC dimension of $\mathcal{H}$ is the largest integer $d$ such that there exists a subset of $\Omega(\mathcal{L},k)$ with cardinality $d$ that is shattered by $\mathcal{H}$.
\end{definition}

\noindent The next definition formalizes what we mean when we say that two functions are equivalent w.r.t. a given global example.

\begin{definition}
We say two functions $f$ and $g$ are $k$-equivalent w.r.t.\ a global example $\aleph$ if for any size-$k$ set $\cS$ it holds that $f(\aleph\langle\cS\rangle) = g(\aleph\langle\cS\rangle)$.
\end{definition}

\noindent Naturally the above two definitions can also be applied to theories, e.g.\ two theories $\Phi$ and $\Theta$ are $k$-equivalent w.r.t.\ a global example $\aleph$ if their associated functions $f_{\Phi}$ and $f_{\Theta}$ are $k$-equivalent.
The following observation will play an important role in the proofs.
\begin{observation}\label{remark:finitely_many_hypotheses}
The maximum number of hypotheses that are mutually non-equivalent w.r.t.\ a given (finite) global example $\aleph$ is finite.
\end{observation}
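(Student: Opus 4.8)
The plan is to observe that $k$-equivalence w.r.t.\ $\aleph$ depends only on how a hypothesis behaves on the fragments induced by the size-$k$ subsets of $\cC_\aleph$, of which there are only finitely many. Concretely, I would first collect all such fragments into the set
\[
\mathcal{F} = \{\, \aleph\langle \cS \rangle \;:\; \cS \subseteq \cC_\aleph,\ |\cS| = k \,\} \subseteq \Omega(\mathcal{L},k).
\]
Since $\cC_\aleph$ is finite by assumption, there are at most $\binom{|\cC_\aleph|}{k}$ size-$k$ subsets $\cS$, so $|\mathcal{F}| \le \binom{|\cC_\aleph|}{k} < \infty$.

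I would then make the connection between equivalence classes and functions on $\mathcal{F}$ explicit. Unwinding the definition of $k$-equivalence, two hypotheses $f$ and $g$ are $k$-equivalent w.r.t.\ $\aleph$ precisely when $f(\Gamma) = g(\Gamma)$ for every $\Gamma \in \mathcal{F}$, i.e.\ when their restrictions $f|_{\mathcal{F}}$ and $g|_{\mathcal{F}}$ coincide as elements of $\{0,1\}^{\mathcal{F}}$. Consequently, the assignment $f \mapsto f|_{\mathcal{F}}$ descends to an injection from the set of $k$-equivalence classes of hypotheses into $\{0,1\}^{\mathcal{F}}$.

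Finally, since $\mathcal{F}$ is finite, the codomain $\{0,1\}^{\mathcal{F}}$ has exactly $2^{|\mathcal{F}|} \le 2^{\binom{|\cC_\aleph|}{k}}$ elements, so the number of $k$-equivalence classes — equivalently, the maximum number of mutually non-equivalent hypotheses — is at most $2^{\binom{|\cC_\aleph|}{k}}$, which is finite. Because this is in essence a counting argument, I do not anticipate a genuine obstacle; the only step that deserves care is the verification that equivalence is governed exactly by the restriction to $\mathcal{F}$ (that agreement on all of $\mathcal{F}$ forces $k$-equivalence, and that non-equivalent hypotheses must disagree on some $\Gamma \in \mathcal{F}$), which is immediate from the definitions.
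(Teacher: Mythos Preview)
Your argument is correct; the paper itself states this as a remark without proof, treating it as immediate from the definitions, and your counting argument via the restriction map $f \mapsto f|_{\mathcal{F}}$ into $\{0,1\}^{\mathcal{F}}$ with $|\mathcal{F}| \le \binom{|\cC_\aleph|}{k}$ is precisely the natural justification implicit in that remark.
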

A consequence of this observation is that even for infinite hypothesis classes, in principle, there are only finitely many different hypotheses that need to be considered. However, given that we typically do not know the size of the global example, in practice it is not possible to rely on the number of non-equivalent hypotheses to apply the bounds from \cite{kuzelka2018arxivPACReasoning} to infinite hypothesis classes. In contrast, the bounds that we introduce in this paper can still be applied in such cases, as long as the hypothesis class has a finite VC-dimension.

The ability to deal with infinite hypothesis classes makes it possible, for instance, to learn theories based on differentiable architectures \cite{lrnns,DBLP:conf/nips/Rocktaschel017} or based on graph kernels \cite{vishwanathan2010graph}.

\section{Motivation}
The main aim of this paper is to derive bounds on how accurately we can estimate $Q_{\aleph,k}(f)$ from a given training relational example $\Upsilon$, where $f$ is viewed as a logical formula. The need for such probability estimates naturally arises, among others, in the setting of relational marginal problems, which were studied in \cite{kuzelka2018relational}. In that setting, we are given a set of formulas $\Theta = \{\alpha_1,\dots,\alpha_{|\Theta|} \}$, a set of constants $\mathcal{C}$ and a training relational example $\Upsilon = (\cA_\Upsilon,\cC_\Upsilon)$. The task is to use the probabilities of $\alpha_1,\dots,\alpha_{|\Theta|}$ that are estimated from the training relational example $\Upsilon$ to perform inference on the domain $\mathcal{C}$. Specifically, the task is to find a maximum entropy distribution on the set of all relational examples of the form $\Psi = (\cA_\Psi, \cC)$, such that $\mathbb{E}[Q_{\Psi,k}(\alpha_i)] = \widehat{Q}_{\Upsilon,k}(\alpha_i)$ for all $\alpha_i \in \Theta$.
Here, $\widehat{Q}_{\Upsilon,k}(\alpha_i)$ is an estimate of $\mathbb{E}[Q_{\Psi,k}(\alpha_i)]$ which is based on $Q_{\Upsilon,k}(\alpha_i)$. If $|\cC| \leq |\cC_\Upsilon|$ then this estimate is simply given by $\widehat{Q}_{\Upsilon,k}(\alpha_i) = Q_{\Upsilon,k}(\alpha_i)$.
In general, however, the value $Q_{\Upsilon,k}(\alpha_i)$ needs to be adjusted to account for the difference in the size of the training relational example domain $\cC_\Upsilon$ and the domain $\mathcal{C}$ over which we want to perform inference. The resulting distribution is similar to a Markov logic network, and can be used in applications for similar purposes\footnote{The relational marginal problems that we consider in this paper are referred to as Model A in \cite{kuzelka2018relational}. Another type of relational marginal problems, referred to as Model B in \cite{kuzelka2018relational}, leads to distributions that are exactly Markov logic networks.}; it is an exponential family distribution of the following form:
 $$
 P(\Psi) = \frac{1}{Z} \exp{\left( \sum_{\alpha_i \in \Theta} w_i \cdot Q_{\Psi,k}(\alpha_i) \right)}.
 $$
In the case $|\cC| = |\cC_\Upsilon|$, the weights $w_i$ can be obtained by solving a maximum likelihood problem which is the dual of the maximum entropy problem.
Ideally, we would use $Q_{\aleph,k}(\alpha_i)$ as the estimates of $Q_{\Psi,k}(\alpha_i)$ in the maximum entropy problems. Since, in reality, we do not have {access to} $Q_{\aleph,k}(\alpha_i)$, we {need} to use the estimates based on $Q_{\Upsilon,k}(\alpha_i)$. {The results we present in this paper shed light on the impact of this simplification.} We refer the reader to \cite{kuzelka2018relational} for more details.


Estimates of $Q_{\aleph,k}(f)$ also play a central role in the analysis of PAC-reasoning \cite{juba,valiant_knowledge_infusion} for relational domains as studied in \cite{kuzelka2018arxivPACReasoning}. This analysis also relies on the  sampling assumptions from Definition \ref{def:sampling_setting}. Specifically, in that setting, a training relational example $\Upsilon$ and a test relational example $\Psi$ are sampled from $\aleph$ and the learner's task is to find a set of first-order logic formulas that will not produce too many errors on $\Psi$ {when} using a restricted form of classical reasoning.
To obtain guarantees on the number of literals that are incorrectly inferred using this form or reasoning, we essentially need to bound the difference of $Q_{\Upsilon,k}(\Phi)$ and $Q_{\aleph,k}(\Phi)$ (which allows us to bound the difference with $Q_{\Psi,k}(\Phi)$), which is exactly the problem we also study in this paper. In contrast to \cite{kuzelka2018arxivPACReasoning}, however, we are interested in bounds that are based on the VC-dimension of the hypothesis space.



\section{Summary of the Results}

Intuitively, what we need to find is a suitable bound on the quantity $|Q_{\aleph,k}(f) - Q_{\Upsilon,k}(f)|$, i.e.\ we want to bound the error we make when estimating the overall probability of $f$ (i.e.\ the value $Q_{\aleph,k}(f)$) from a training fragment of the global example. In most application settings, however, $f$ itself is also chosen using the training relational example $\Upsilon$, e.g.\ by choosing the hypothesis $f$ that maximizes $Q_{\Upsilon,k}(f)$ among the functions from some hypothesis class $\mathcal{H}$. This means that we cannot find a suitable bound for $|Q_{\aleph,k}(f) - Q_{\Upsilon,k}(f)|$ without taking the hypothesis class $\mathcal{H}$ into account. The classical solution, which we will also follow, is to instead bound the quantity $\sup_{f \in \mathcal{H}}\left| Q_{\aleph,k}(f) - Q_{\Upsilon,k}(f) \right|$. The main result of this paper takes the form of two theorems that provide probabilistic bounds on this latter quantity. The proof of these theorems is presented in Section \ref{sec:proofs}.

The first theorem bounds the expected value of
$\sup_{f \in \mathcal{H}}\left| Q_{\aleph,k}(f) - Q_{\Upsilon,k}(f) \right|$ when $\cC_\Upsilon$ is viewed as a random variable.
Interestingly, this bound is essentially the same as the classical bound for the i.i.d. setting \cite{shalev2014understanding}, except that the value of $n$ from the classical bound is replaced by $\lfloor n/k \rfloor$, which is perhaps not surprising as it is the maximum number of non-overlapping size-$k$ subsets of $C_\Upsilon$.


\begin{theorem}\label{lemma:expected_error}
Let $\aleph = (\cA_\aleph, \cC_\aleph)$ be a global example and $\cC_\Upsilon$ be sampled uniformly from all size-$n$ subsets of $\cC_\aleph$ and let us define $\Upsilon = \aleph \langle \cC_\Upsilon \rangle$. Then for any hypothesis class $\mathcal{H}$ of functions $f : \Omega(\mathcal{L},k) \rightarrow \{0,1\}$ with finite VC-dimension $d$, the following holds:
\begin{align*}
\mathbb{E}\left[\sup_{f\in \mathcal{H}}\left|Q_{\aleph,k}(f)-Q_{\Upsilon,k}(f)\right|\right] \leq 2 \cdot \sqrt{\frac{2d \log \left(2 e \lfloor n/k \rfloor / d \right)}{\lfloor n/k \rfloor}}
\end{align*}
\end{theorem}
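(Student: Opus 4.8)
The plan is to reduce the problem to the classical i.i.d.\ uniform-convergence bound applied with $m := \lfloor n/k \rfloor$ effective samples, which is exactly what produces the stated right-hand side. The starting observation is that, writing $g_f(\mathcal{S}) := f(\aleph\langle \mathcal{S}\rangle)$ for a size-$k$ set $\mathcal{S}$, the quantity $Q_{\Upsilon,k}(f)$ is a U-statistic: an average of $g_f$ over all $\binom{n}{k}$ size-$k$ subsets of the sampled $\cC_\Upsilon$, while $Q_{\aleph,k}(f) = \mathbb{E}[g_f(\mathcal{S})]$ is the corresponding population mean over all size-$k$ subsets of $\cC_\aleph$. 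The difficulty is that the size-$k$ subsets of $\cC_\Upsilon$ overlap heavily, so they are far from independent; this is precisely what the $\lfloor n/k \rfloor$ in the bound reflects.

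First I would apply Hoeffding's averaging trick for U-statistics. For a fixed ordering of the $n$ elements of $\cC_\Upsilon$, group the first $mk$ of them into $m$ disjoint blocks $\mathcal{S}_1,\dots,\mathcal{S}_m$ of size $k$ and form the block average $W(f) = \frac{1}{m}\sum_{j=1}^m g_f(\mathcal{S}_j)$. Averaging $W(f)$ over all orderings recovers $Q_{\Upsilon,k}(f)$ exactly. Moreover, since $\cC_\Upsilon$ is itself a uniform size-$n$ subset of $\cC_\aleph$, the two-stage process ``sample $\cC_\Upsilon$, then choose an ordering'' is the same as drawing an ordered sequence of $n$ distinct elements uniformly without replacement from $\cC_\aleph$. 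Applying Jensen's inequality to pull $\sup_f|\cdot|$ inside the average over orderings, and using the exchangeability of this draw, reduces the claim to bounding $\mathbb{E}\big[\sup_{f}|Q_{\aleph,k}(f) - \frac{1}{m}\sum_{j=1}^m g_f(\mathcal{S}_j)|\big]$, where $\mathcal{S}_1,\dots,\mathcal{S}_m$ are now $m$ \emph{disjoint} size-$k$ blocks, each marginally uniform over size-$k$ subsets of $\cC_\aleph$.

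The heart of the argument is then to pass from these disjoint (hence negatively dependent) blocks to $m$ independent ones. Here I would invoke a Hoeffding-type comparison: for any convex functional of the empirical block average---and $\Phi(\mathcal{S}_1,\dots,\mathcal{S}_m)=\sup_f| Q_{\aleph,k}(f)-\frac{1}{m}\sum_j g_f(\mathcal{S}_j)|$ is convex in the vector of averages $(\frac{1}{m}\sum_j g_f(\mathcal{S}_j))_{f}$---sampling the blocks disjointly (a without-replacement scheme at the element level) yields an expectation no larger than drawing $m$ i.i.d.\ uniform size-$k$ subsets. I expect this comparison to be the main obstacle: the naive symmetrization via an independent ghost sample fails, because swapping a block between two independent without-replacement draws violates the disjointness constraint and so does not preserve the joint law, and an internal split into sample/ghost halves would replace $m$ by $\lfloor m/2 \rfloor$ and weaken the bound. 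Establishing the convex domination (equivalently, a negative-association argument in the spirit of Hoeffding's inequality for sampling without replacement) is therefore the step requiring the most care.

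Finally, once we are reduced to $m$ i.i.d.\ uniform size-$k$ subsets, it only remains to apply the classical VC-dimension bound for the i.i.d.\ setting. I would note that the induced class $\{\mathcal{S}\mapsto f(\aleph\langle\mathcal{S}\rangle): f\in\mathcal{H}\}$ on size-$k$ subsets of $\cC_\aleph$ has VC-dimension at most $d$, since any shattered collection of subsets corresponds, via $\mathcal{S}\mapsto\aleph\langle\mathcal{S}\rangle$, to a shattered subset of $\Omega(\mathcal{L},k)$. Then the standard chain---symmetrization, the Sauer--Shelah lemma, and Massart's finite-class lemma, exactly as in \cite{shalev2014understanding}---bounds the i.i.d.\ expected uniform deviation over $m$ samples by $2\sqrt{2d\log(2em/d)/m}$, and substituting $m=\lfloor n/k \rfloor$ gives the claim.
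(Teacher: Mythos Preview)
Your high-level strategy matches the paper's: reduce to the classical i.i.d.\ VC bound with $m=\lfloor n/k\rfloor$ effective samples via a Hoeffding-type averaging combined with Jensen's inequality. The route to the i.i.d.\ blocks, however, is genuinely different, and the step you flag as ``the main obstacle'' is precisely what the paper manages to avoid.

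You use the classical Hoeffding decomposition: write $Q_{\Upsilon,k}(f)$ as an average over orderings of $\cC_\Upsilon$ of the disjoint-block estimator, apply Jensen, and arrive at $m$ \emph{disjoint} size-$k$ blocks that, marginally over $\cC_\Upsilon$, are drawn without replacement (at the element level) from $\cC_\aleph$. You then need the convex-domination inequality $\mathbb{E}[\Phi(\text{disjoint})]\le \mathbb{E}[\Phi(\text{i.i.d.})]$ for $\Phi=\sup_f|\cdot|$. This is \emph{not} the standard scalar Hoeffding comparison: the blocks are not samples without replacement from a fixed finite population of blocks, so Hoeffding's Theorem~4 does not apply directly, and the symmetrization route you sketch indeed breaks down for the reason you give. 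Making this step rigorous would require a separate negative-association or coupling argument uniform over $f\in\mathcal{H}$, which you have not supplied.

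The paper bypasses this entirely with a different coupling (Lemma~\ref{lemma:many_vectors}, proved combinatorially as Lemma~5 in the appendix): draw $m$ size-$k$ index sets i.i.d.\ from $\{1,\dots,|\cC_\aleph|\}$ (note the size of the \emph{global} domain, not $n$), then map their union injectively at random into $\cC_\Upsilon$. A counting argument shows the resulting blocks are distributed \emph{exactly} as $m$ i.i.d.\ uniform size-$k$ subsets of $\cC_\aleph$---no domination is needed. The cost is that, conditionally on $\cC_\Upsilon$, this auxiliary construction is random (blocks may overlap and do not arise from a fixed partition), so the paper cannot use your exact averaging identity; instead it averages $q$ independent copies of the construction and sends $q\to\infty$ to recover $Q_{\Upsilon,k}(f)$ (Lemma~\ref{lemma:law_of_large_numbers}). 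After that, the same Jensen/triangle-inequality chain you describe reduces to a single i.i.d.\ block vector, and Lemma~\ref{lemma:classical_vc} finishes.

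In short: your plan is sound in outline, but the convex-domination step is a genuine gap as written. The paper's index-set coupling is the missing idea that converts ``dominated by i.i.d.'' into ``equal in law to i.i.d.'', at the price of a limit argument in place of your exact Hoeffding identity.
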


\noindent The second theorem provides a tail bound for $P\left[\sup_{f \in \cal{H}} \left| Q_{\aleph,k}(f) - Q_{\Upsilon,k}(f)  \right| \ge \varepsilon \right]$. We note that the bound on expected error from Theorem \ref{lemma:expected_error} cannot be derived from Theorem \ref{thm:thm_main}, although a different bound on expected error with looser constants could be derived from Theorem \ref{thm:thm_main}.

\begin{theorem}\label{thm:thm_main}
Let $\aleph = (\cA_\aleph, \cC_\aleph)$ be a global example and $\cC_\Upsilon$ be sampled uniformly from all size-$n$ subsets of $\cC_\aleph$ and let us define $\Upsilon = \aleph \langle \cC_\Upsilon \rangle$. Then for any hypothesis class $\mathcal{H}$ of functions $f : \Omega(\mathcal{L},k) \rightarrow \{0,1\}$ with finite VC-dimension $d$, the following holds for any $0 < \varepsilon \leq 1$:
\begin{multline*}
    P\left[\sup_{f \in \cal{H}} \left| Q_{\aleph,k}(f) - Q_{\Upsilon,k}(f)  \right| \ge \varepsilon \right] \\
    \leq \exp{\left(- \frac{\lfloor n/k \rfloor \varepsilon^2}{4}\right)} + \varepsilon \sqrt{8 \pi \lfloor n/k \rfloor} \left(\frac{2e \lfloor n/k \rfloor}{d} \right)^d \cdot \exp{\left( - \frac{\lfloor n/k \rfloor \varepsilon^2}{8} \right)}
\end{multline*}
\end{theorem}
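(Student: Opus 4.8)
The plan is to turn this relational, sampling-without-replacement problem into a textbook VC tail bound for $m := \lfloor n/k \rfloor$ samples, and then run the classical symmetrization plus Sauer--Shelah argument. First I would invoke Observation~\ref{remark:finitely_many_hypotheses}: since $\Upsilon\langle\cS\rangle = \aleph\langle\cS\rangle$ for every $\cS \subseteq \cC_\Upsilon$, both $Q_{\aleph,k}(f)$ and $Q_{\Upsilon,k}(f)$ depend on $f$ only through its $k$-equivalence class w.r.t.\ $\aleph$, so I may replace $\mathcal{H}$ by a finite set of representatives without changing the supremum or increasing the VC-dimension beyond $d$. The genuinely relational step is then to note that $Q_{\Upsilon,k}(f)$ is a $U$-statistic: ordering the $n$ sampled constants and cutting them into $m$ consecutive disjoint size-$k$ blocks $\cS_1,\dots,\cS_m$ gives the block average $\frac{1}{m}\sum_{i=1}^{m} f(\aleph\langle\cS_i\rangle)$, and averaging this over all orderings recovers $Q_{\Upsilon,k}(f)$ exactly (Hoeffding's averaging identity). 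This is the origin of the replacement of $n$ by $\lfloor n/k \rfloor$.

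To pass from this identity to a tail bound I would work at the level of the moment generating function: bound $P[\sup_f(\cdot) \ge \varepsilon]$ by the Chernoff quantity $\mathbb{E}[\exp(\lambda\sup_f(\cdot))]$, and use convexity of $\exp$ (Jensen) to push the expectation over orderings outside the supremum. The upshot is that the deviation of the $U$-statistic supremum is dominated by the uniform deviation of a plain empirical average over $m$ block-samples drawn \emph{without replacement} from $\aleph$. At this point the relational structure has been eliminated and the problem is the standard one of uniform convergence of an empirical mean over $m$ samples; the two forms of dependence --- the overlap of the size-$k$ subsets (handled by the averaging identity) and the without-replacement sampling (handled by Hoeffding's classical observation that such averages concentrate at least as sharply as i.i.d.\ ones) --- are exactly what this reduction disposes of.

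For the resulting $m$-sample problem I would re-derive the VC tail bound so as to track the constants. Symmetrization against a ghost block-sample reduces the one-sided deviation of $\sup_f|Q_{\aleph,k}(f) - (\text{block average})|$ at scale $\varepsilon$ to the deviation of the difference of two empirical block averages at scale $\varepsilon/2$; validating this symmetrization requires an auxiliary one-dimensional concentration estimate for the ghost average, and it is this estimate that is responsible for the first summand $\exp(-\lfloor n/k\rfloor\varepsilon^2/4)$. On the joint $2m$ blocks only finitely many dichotomies are realized by $\mathcal{H}$, and the Sauer--Shelah lemma caps their number by $(2em/d)^{d}$, giving the $(2e\lfloor n/k\rfloor/d)^{d}$ factor. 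For each fixed dichotomy a Hoeffding bound on the Rademacher/permutation average of the $2m$ Boolean block values yields a per-hypothesis tail of order $\exp(-\lfloor n/k\rfloor\varepsilon^2/8)$ at threshold $\varepsilon/2$, and the polynomial prefactor $\varepsilon\sqrt{8\pi\lfloor n/k\rfloor}$ emerges from summing this tail over the $O(m)$ admissible integer block counts via a Stirling estimate. A union bound over the $(2em/d)^d$ dichotomies then produces the second summand, and adding the two contributions gives the stated bound. The main obstacle is precisely the dependence: making the averaging-identity reduction rigorous at the level of the moment generating function, and then carrying the without-replacement comparison through the symmetrization step while keeping the constants in the form displayed, is the delicate part of the argument; once the reduction to $m$ independent-like draws is in place, the remainder is the familiar VC bookkeeping.
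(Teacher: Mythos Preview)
Your high-level plan --- reduce the $U$-statistic supremum to a block-average supremum via Hoeffding averaging, pass to the moment-generating function by Chernoff and Jensen, then invoke VC theory for the $m=\lfloor n/k\rfloor$ blocks --- is the right architecture and matches the paper in spirit. But you misidentify the step that produces the displayed constants, and this is where the real work lies. After Chernoff and Jensen you are holding $e^{-\lambda\varepsilon}\,\mathbb{E}[\exp(\lambda T_1)]$ with $T_1=\sup_{f}|Q_{\aleph,k}(f)-\text{block average}|$; what you need is a bound on this MGF, not a tail bound on $T_1$. The paper does \emph{not} re-derive the VC inequality here. It quotes the classical i.i.d.\ VC tail bound as a black box (Lemma~\ref{lemma:classical_vc}: $P[T_1\ge t]\le C e^{-t^2/B}$ with $C=4(2em/d)^d$, $B=8/m$) and then converts that tail bound back into an MGF bound via a separate moment-integration lemma (Lemma~\ref{lemma:characteristic}), obtaining $\mathbb{E}[e^{\lambda T_1}]\le 1+C\lambda\sqrt{\pi B}\,e^{\lambda^2 B/4}$. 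Substituting $\lambda=\varepsilon m/4$ into the Chernoff expression is what splits the bound into its two pieces: the ``$1$'' contributes $e^{-\lambda\varepsilon}=\exp(-m\varepsilon^2/4)$, and the second term contributes $\varepsilon\sqrt{8\pi m}\,(2em/d)^d\exp(-m\varepsilon^2/8)$. So the first summand is not a symmetrization side-condition on the ghost sample, and the prefactor $\varepsilon\sqrt{8\pi m}$ is not a Stirling sum over ``admissible block counts'' --- both are artifacts of this tail$\to$MGF$\to$Chernoff loop. A direct symmetrization and Sauer--Shelah argument would give a bound of the same order but would not reproduce these exact constants.

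A smaller but related divergence: your disjoint-block decomposition yields blocks drawn without replacement, and invoking Hoeffding's with/without-replacement comparison \emph{inside} a supremum over $\mathcal{H}$ is not automatic. The paper sidesteps this entirely: its sampling construction (Lemma~\ref{lemma:many_vectors}) is engineered so that the block entries are distributed \emph{exactly} as i.i.d.\ size-$k$ subsets of $\cC_\aleph$, at the price of an auxiliary $q\to\infty$ limit in place of your exact averaging identity. That is why the off-the-shelf i.i.d.\ VC inequality can be applied verbatim in Lemma~\ref{lemma:classical_vc}.
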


\noindent Up to somewhat looser constants, the tail bound from Theorem \ref{thm:thm_main} can be shown to also have the same form as the existing VC tail bounds \cite{vapnik1971uniform}. In particular, the bound implies the following simpler, albeit looser bound:
\begin{multline*}
P\left[\sup_{f \in \cal{H}} \left| Q_{\aleph,k}(f) - Q_{\Upsilon,k}(f)  \right| \ge \varepsilon \right] \\
\leq \left(1 + \sqrt{8 \pi \lfloor n/k \rfloor} \left(\frac{2e \lfloor n/k \rfloor}{d} \right)^d \right)\cdot \exp{\left( - \frac{\lfloor n/k \rfloor \varepsilon^2}{8} \right)}.
\end{multline*}

\section{Related Work}


There have been several works studying theoretical properties of various statistical relational learning settings.
Dhurandhar and Dobra \cite{dhurandhar2012distribution} derived Hoeffding-type inequalities for classifiers trained with relational data. However, there are several important differences with our work. First, their bounds are not VC-type bounds. Moreover, their results, based on restricting the independent interactions of data points, cannot be applied in our setting, which is more general than the one they consider. 
Certain other statistical properties of learning have also been studied for SRL models. For instance, Xiang and Neville \cite{xiang2011relational} studied consistency of estimation in a certain relational learning setting.

From a different perspective, abstracting from the relational logic setting, our results can also be seen as bounds for uniform deviations of U-statistics \cite{hoeffding1948class} under sampling {\em without} replacement. Not many results are known for this particular setting in the literature. One exception is the work of Nandi and Sen \cite{nandi1963properties} who only derived bounds on variance in this setting. It is not possible to derive our results from theirs. In particular, we need Chernoff-type bounds whereas the variance bounds from their work would only give us Chebyshev-type bounds. A more thoroughly studied setting is the estimation of U-statistics under sampling {\em with} replacement. Cl\'{e}mencon, Lugosi and Vayatis \cite{clemenccon2008ranking} derived among others\footnote{The main results of \cite{clemenccon2008ranking} are bounds that assume a certain 'low-noise' condition. Although they only derived bounds for the case $k = 2$ (in our notation), the results directly related to ours can be extended for larger $k$'s as well.} VC-inequalities in a setting similar to ours, but under sampling with replacement, which makes their analysis simpler. 
However, such an assumption would not make sense in the relational learning setting where it would mean, for instance, that we would end up with multiple copies of the same individual (e.g.\ ending up with social networks in which the same person can occur multiple times).



\section{{Derivation of the Bounds}}\label{sec:proofs}

In this section, we prove Theorems \ref{lemma:expected_error} and \ref{thm:thm_main} using a series of lemmas. First, in Section \ref{sec:extracting}, we define a sampling process for generating vectors containing $\lfloor n/k \rfloor$ size-$k$ fragments of $\Upsilon$. The sampling process has two important properties. First, the fragments in each of the vectors are distributed as size-$k$ fragments sampled i.i.d. from $\aleph$ (assuming $\Upsilon$ is sampled as in Definition \ref{def:sampling_setting}). Second, the average of the estimates of $Q_{\aleph,k}(f)$ computed from the vectors converges to $Q_{\Upsilon,k}(f)$. These two properties allow us to use the sampling process to derive a bound on expected value of the random variable $\sup_{f \in \mathcal{H}}\left| Q_{\aleph,k}(f) - Q_{\Upsilon,k}(f) \right|$ in Section \ref{sec:expected}, which finishes the proof of Theorem \ref{lemma:expected_error}.

The proof of Theorem \ref{thm:thm_main} is a bit more involved. First, in Section \ref{sec:characteristic_from_tail}, we derive bounds on the moment-generating function of a random variable that can be obtained if we only know its tail bounds. Then, in Section \ref{sec:tail_from_characteristic}, we combine the results from the preceding sections to prove Theorem \ref{thm:thm_main}. In particular, we use the bound moment-generating function to obtain a tail bound on the estimates of $Q_{\aleph,k}(f)$ by exploiting a trick that is sometimes called {\em average of sums-of-i.i.d blocks} \cite{clemenccon2008ranking}. 

\subsection{Extracting Independent Samples}\label{sec:extracting}


In this section we describe a sampling process that allows us to obtain $\lfloor n/k \rfloor$ samples from $\Upsilon$ that are distributed as i.i.d. samples from $\aleph$, assuming $\Upsilon$ is sampled as in Definition \ref{def:sampling_setting}.

\begin{lemma}\label{lemma:many_vectors}
Let $\aleph = (\cA_\aleph, \cC_\aleph)$ be a global example. Let $0 \leq n \leq |\cC_\aleph|$, $q \geq 1$ and $1 \leq k \leq n$ be integers. Let $\mathbf{X} = (\cS_1,\cS_2,\dots,\cS_{\lfloor \frac{n}{k} \rfloor})$ be a vector of subsets of $\cC_\aleph$, each sampled uniformly and independently of the others from all size-$k$ subsets of $\cC_\aleph$. Next let $\mathcal{I}' = \{ 1,2,\dots,|\cC_\aleph| \}$ and let $\mathbf{Y}_j = (\cS_{j,1}',\cS_{j,2}',\dots,\cS_{j,\lfloor \frac{n}{k} \rfloor}')$, for $1 \leq j \leq q$, be vectors sampled by the following process:
\begin{enumerate}
    \item Sample $\mathcal{C}_\Upsilon$ uniformly from all size-$n$ subsets of $\mathcal{C}_\aleph$.
    \item For $j$ from $1$ to $q$:
    \begin{enumerate}
        \item Sample subsets $\mathcal{I}_1',\dots,\mathcal{I}_{\lfloor \frac{n}{k} \rfloor}'$ of size $k$ from $\mathcal{I}'$.
        \item Sample an injective function $g : \bigcup_{i=1}^{\lfloor n/k \rfloor} \mathcal{I}_i' \rightarrow \cC_\Upsilon$ uniformly from all such functions.
        \item Define $\cS_{j,i}' = g(\mathcal{I}_i')$ for all $0 \leq i \leq \lfloor \frac{n}{k} \rfloor$.
    \end{enumerate}
\end{enumerate}
Then the following holds:
\begin{enumerate}
    \item The random vectors $\mathbf{X}$ and $\mathbf{Y}_j$ have the same distribution for any $1 \leq j \leq q$.
    \item For any function $f : \Omega(\mathcal{L},k) \rightarrow [0,1]$ it holds:
    \begin{align*}
        P\left[ \left|Q_{\Upsilon,k}(f) - \frac{1}{q \lfloor n/k \rfloor} \sum_{j=1}^q\sum_{i=1}^{\lfloor n/k \rfloor} f\left(\Upsilon\langle\cS_{j,i}'\rangle \right) \right| \geq \epsilon \right] \leq 2 \exp{\left( - 2q \varepsilon^2 \right)}
    \end{align*}
\end{enumerate}
\end{lemma}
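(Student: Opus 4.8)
**The plan is to prove the two parts separately, establishing the distributional identity first and then the concentration bound.**

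For Part 1, the goal is to show $\mathbf{X}$ and $\mathbf{Y}_j$ have the same distribution. The vector $\mathbf{X}$ consists of $\lfloor n/k \rfloor$ size-$k$ subsets of $\cC_\aleph$, each drawn uniformly and independently. My strategy would be to argue that the two-stage sampling in the construction of $\mathbf{Y}_j$ marginalizes to exactly this distribution. First I would note that conditioning on $\cC_\Upsilon$, the sets $\mathcal{I}_1',\dots,\mathcal{I}_{\lfloor n/k \rfloor}'$ are uniform size-$k$ subsets of an index set $\mathcal{I}'$ of size $|\cC_\aleph|$, and the injective map $g$ pushes these forward into $\cC_\Upsilon$. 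The key observation is that composing a uniform choice of index sets with a uniformly random injection into a uniformly random size-$n$ subset $\cC_\Upsilon$ of $\cC_\aleph$ should yield, for each coordinate $\cS_{j,i}'$, a uniform size-$k$ subset of $\cC_\aleph$, and moreover that the coordinates are jointly independent. I would verify this by a direct symmetry/counting argument: for any fixed tuple of target subsets, computing the probability by summing over the intermediate choices of $\cC_\Upsilon$, the $\mathcal{I}_i'$, and $g$, and checking it equals $\binom{|\cC_\aleph|}{k}^{-\lfloor n/k \rfloor}$. The independence across coordinates follows because the index sets $\mathcal{I}_i'$ are disjoint (they are distinct size-$k$ subsets packed into $\mathcal{I}'$) and $g$ is a single global injection, so the images are disjoint and exchangeable.

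For Part 2, the aim is a Hoeffding-type bound on the deviation of the empirical average $\frac{1}{q\lfloor n/k \rfloor}\sum_{j,i} f(\Upsilon\langle \cS_{j,i}'\rangle)$ from $Q_{\Upsilon,k}(f)$, conditional on $\Upsilon$ (equivalently on $\cC_\Upsilon$). The crucial structural point is that the randomness across the $q$ outer iterations is independent: for each $j$ we independently sample fresh index sets and a fresh injection $g$, all conditioned on the same fixed $\cC_\Upsilon$. So I would condition on $\cC_\Upsilon$ and define, for each $j$, the block average $A_j = \frac{1}{\lfloor n/k \rfloor}\sum_{i=1}^{\lfloor n/k \rfloor} f(\Upsilon\langle \cS_{j,i}'\rangle) \in [0,1]$. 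These $A_1,\dots,A_q$ are i.i.d.\ (given $\cC_\Upsilon$), each bounded in $[0,1]$, and I would verify that their common expectation equals $Q_{\Upsilon,k}(f)$. The latter holds because each $\cS_{j,i}'$ is (conditionally on $\cC_\Upsilon$) a uniformly random size-$k$ subset of $\cC_\Upsilon$, so $\expect{f(\Upsilon\langle \cS_{j,i}'\rangle)} = Q_{\Upsilon,k}(f)$ by Definition \ref{def:probability_of_formula}. Applying Hoeffding's inequality to the i.i.d.\ sum $\frac{1}{q}\sum_{j=1}^q A_j$ of $[0,1]$-valued variables yields the two-sided bound $2\exp(-2q\varepsilon^2)$. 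Since this conditional bound is uniform over all $\cC_\Upsilon$, it holds unconditionally.

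The main obstacle I anticipate is Part 1, specifically verifying that the image coordinates $\cS_{j,i}'$ are genuinely uniform and mutually independent size-$k$ subsets of $\cC_\aleph$ rather than merely of $\cC_\Upsilon$. The subtlety is that $\cC_\Upsilon$ is itself random, so one must integrate out $\cC_\Upsilon$ correctly; intuitively, a uniform injection from a fixed index set into a uniform size-$n$ subset of $\cC_\aleph$ is statistically the same as a uniform injection directly into $\cC_\aleph$, but this requires a careful compatibility/averaging argument. I would handle this by checking that the composed map sends each $\mathcal{I}_i'$ to a uniformly random size-$k$ subset of $\cC_\aleph$ and that disjointness of the index sets is preserved by injectivity of $g$, giving the independence. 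The concentration step in Part 2 is then routine once the conditional i.i.d.\ structure is in place.
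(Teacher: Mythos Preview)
Your plan matches the paper's proof almost exactly: Part~1 is handled by a direct counting computation (deferred to the appendix as Lemma~\ref{lemma:crazy1}, where one shows $P[\mathbf{Y}_j = (S_1,\dots,S_t)] = \binom{|\cC_\aleph|}{k}^{-t}$ by summing over the intermediate choices of $\cC_\Upsilon$, the $\mathcal{I}_i'$, and $g$), and Part~2 is obtained by conditioning on $\cC_\Upsilon$, forming the block averages $A_j = \frac{1}{\lfloor n/k\rfloor}\sum_i f(\Upsilon\langle \cS'_{j,i}\rangle) \in [0,1]$, noting they are i.i.d.\ with mean $Q_{\Upsilon,k}(f)$ (since each $\cS'_{j,i}$ is uniform over size-$k$ subsets of $\cC_\Upsilon$), and applying Hoeffding.

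One correction worth flagging: your heuristic that ``the index sets $\mathcal{I}_i'$ are disjoint \dots\ so the images are disjoint and exchangeable'' misreads the construction and would in fact break Part~1. In the paper the $\mathcal{I}_i'$ are sampled \emph{independently} (each uniformly from all size-$k$ subsets of $\mathcal{I}'$), not disjointly; the appendix computation sums over $I_1\in[\mathcal{I}']^k,\dots,I_t\in[\mathcal{I}']^k$ with weight $|[\mathcal{I}']^k|^{-t}$. If the images $\cS_{j,i}'$ were forced to be pairwise disjoint they could not be mutually independent, and their joint law could not coincide with that of $\mathbf{X}$, whose i.i.d.\ coordinates may overlap. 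The independence of the coordinates of $\mathbf{Y}_j$ emerges only from the counting argument itself (the probability factorizes after integrating out $\cC_\Upsilon$ and $g$), not from any disjointness. This does not affect your overall strategy, since the verification you actually propose---the explicit sum over all intermediate choices---is exactly what the paper carries out and yields the correct product form.
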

\begin{proof}
The first part of the proof follows immediatelly from Lemma 3 in \cite{kuzelka2018relational} (which, for completeness, we reprove in the  appendix as Lemma 5
).
For the second part, we may first notice that, after $\cC_\Upsilon$ is sampled and fixed, $Q_{\Upsilon,k}(f) = \mathbb{E}\left[ f\left(\Upsilon\langle\cS_{j,i}'\rangle \right) \right]$, as the probability of $\cS_{j,i}'$ being a particular size-$k$ subset of $\cC_\Upsilon$ is the same for all such subsets.
The second part can then be shown by applying Hoeffding inequality to $q$ i.i.d.\ samples $\frac{1}{\lfloor n/k \rfloor}\sum_{i=1}^{\lfloor n/k \rfloor}f(\Upsilon\langle\cS_{j,i}\rangle)$, $j=1,2,\ldots,q$, which have the same expected value
$Q_{\Upsilon,k}(f)$. \qed
\end{proof}

\noindent At this point, one might wonder if the above lemma already gives us a way to find VC-type bounds for relational data, based on the following strategy: sample $\lfloor n/k \rfloor$ size-$k$ fragments from a given training relational example $\Upsilon$ using the procedure defined in Lemma \ref{lemma:many_vectors} and use this set of fragments as our training data. Although this would allow us to use standard bounds that are known for learning from i.i.d. data \cite{vapnik_book}, there are two problems with this approach. The first problem is that in reality we do not always know the size of the global example $\aleph$ and hence we do not know how to get a sample of $\lfloor n/k \rfloor$ size-$k$ sets that behaves as an independent sample from $\aleph$ (noting that we need to know the size of $\aleph$ to define the set $\mathcal{I}'$ in Lemma \ref{lemma:many_vectors}). The second problem is that there are cases where only sampling the $\lfloor n/k \rfloor$ samples is sub-optimal from the point of view of statistical power, as we illustrate in the next example.


\begin{example}\label{ex1}
Consider a global structure which takes the form of a large directed graph, and assume that we are interested in estimating the probability that the formula $\exists X,Y : \textit{edge}(X,Y)$ holds for a fragment of the structure induced by two randomly sampled nodes. Assume furthermore that the given graph was generated by sampling (directed) edges independently with some probability $p$. The probability that $\exists X,Y : \textit{edge}(X,Y)$ holds for any two nodes will thus correspond to some value $p^*$ close to $1-(1-p)^2$. As we will see, given a training fragment induced by $n$ nodes from this graph, we can only generate $\lfloor \frac{n}{2} \rfloor$ samples that behave like i.i.d.\ samples. In this case, a more accurate estimate of $p^*$ can be obtained by using all size-$2$ fragments of the training fragment.
\end{example}

Nonetheless, the strategy based on sampling $\lfloor n/k \rfloor$ size-$k$ fragments may actually be optimal in the worst case as we illustrate in the next example.

\begin{example}
Let us again consider the setting from Example \ref{ex1}, which we can now describe more formally. In particular, assume that $\aleph = (\cA_\aleph,\cC_\aleph)$ represents a large directed graph. Let $k = 2$ and $\Phi = \{ \exists X,Y : \textit{edge}(X,Y) \}$.
Let $\Upsilon$ be a relational example sampled uniformly from $\aleph$ (i.e. $\Upsilon = \aleph\langle \cC_\Upsilon \rangle$ where $\cC_\Upsilon$ is sampled uniformly from all size-$n$ subsets of $\cC_\aleph$). Let us now, in contrast to the assumption underlying Example \ref{ex1}, assume that the directed graph was constructed using the following process.
For all nodes $v$, we flip a biased coin with probability of heads being $q$. If it lands heads, we add a directed edge from $v$ to all other nodes. In this case\footnote{More formally, the following holds, assuming $\aleph$ is generated by the respective random processes. In the setting from Example \ref{ex1} we have $\mathbb{E}_{\aleph}\left[Q_{\aleph,k}(\Phi)\right] = 1-(1-p)^2$ and in the setting from this example we have $\mathbb{E}_{\aleph}\left[Q_{\aleph,k}(\Phi)\right] = 1-(1-q)^2$.}, $Q_{\aleph,k}(\Phi) = p' \approx 1-(1-q)^2$. The main difference with the setting from Example \ref{ex1} is that estimating $p'$ now effectively corresponds to estimation of a property of nodes, as we are also able to recover $p'$ by observing how many nodes have at least one outgoing edge. However, this also means that the effective sample size in this case only grows linearly with the number of vertices (as opposed to quadratically in Example \ref{ex1}). This, at least asymptotically (up to a multiplicative constant), is a worst-case scenario as the number of independent samples that we are able to obtain using Lemma \ref{lemma:many_vectors} also grows linearly with the number of vertices in the sample $\Upsilon$ (i.e. linearly with $|\cC_\Upsilon|$).
\end{example}



\subsection{Bounding Expected Error}\label{sec:expected}

In this section we use the results from Section \ref{sec:extracting} to obtain a bound on the expected value of $\sup_{f\in H}\left|Q_{\aleph,k}(f)-Q_{\Upsilon,k}(f)\right|$.

\begin{lemma}\label{lemma:law_of_large_numbers}
Let $\aleph = (\cA_\aleph, \cC_\aleph)$ be a global example and $\cC_\Upsilon$ be sampled uniformly from all size-$n$ subsets of $\cC_\aleph$ and let us define $\Upsilon = \aleph \langle \cC_\Upsilon \rangle$. Let $\mathbf{Y}_j = (\cS_{j,1}',\dots,\cS_{j,\lfloor \frac{n}{k} \rfloor}')$, where $1 \leq j \leq q$, be random vectors sampled as in Lemma \ref{lemma:many_vectors}. Then for any hypothesis class $\mathcal{H}$ of functions $f : \Omega(\mathcal{L},k) \rightarrow \{0,1\}$ with finite VC-dimension $d$, the following holds:
{\small \begin{align*}
\mathbb{E}\left[\sup_{f\in \mathcal{H}}\left|Q_{\aleph,k}(f)-Q_{\Upsilon,k}(f)\right|\right] \leq \lim_{q\rightarrow \infty} \mathbb{E}\left[\sup_{f\in \mathcal{H}}\left|Q_{\aleph,k}(f)-\frac{1}{q \cdot \left\lfloor \frac{n}{k} \right\rfloor}\sum_{i=1}^q \sum_{\cS \in \mathbf{Y}_i} f(\Upsilon\langle\cS\rangle) \right|\right]
\end{align*}}
\end{lemma}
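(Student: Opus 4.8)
The plan is to prove the stronger statement that the two sides are in fact equal, from which the asserted inequality follows immediately. The idea is to condition on $\cC_\Upsilon$ (equivalently on $\Upsilon$), invoke the independence structure established in Lemma~\ref{lemma:many_vectors}, and let the strong law of large numbers drive the inner average to $Q_{\Upsilon,k}(f)$. First I would fix $\cC_\Upsilon$ and observe that, by the first part of Lemma~\ref{lemma:many_vectors} together with the argument used for its second part, the block averages $B_j(f) = \frac{1}{\lfloor n/k\rfloor}\sum_{\cS\in\mathbf{Y}_j} f(\Upsilon\langle\cS\rangle)$ are, conditionally on $\cC_\Upsilon$, i.i.d.\ across $j$ with common mean $Q_{\Upsilon,k}(f)$, since each $\cS_{j,i}'$ is uniform on the size-$k$ subsets of $\cC_\Upsilon$. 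The quantity $Q_{\aleph,k}(f)$ is a deterministic constant depending only on $\aleph$, so the only source of randomness inside the limit is the sequence $\mathbf{Y}_1,\mathbf{Y}_2,\dots$.

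The crucial simplification is Remark~\ref{remark:finitely_many_hypotheses}: although $\mathcal{H}$ may be infinite, every fragment $\Upsilon\langle\cS\rangle$ appearing in these expressions (with $\cS\subseteq\cC_\Upsilon\subseteq\cC_\aleph$) is a fragment of the finite global example $\aleph$, so the values $f(\Upsilon\langle\cS\rangle)$, $Q_{\Upsilon,k}(f)$ and $Q_{\aleph,k}(f)$ depend on $f$ only through its $k$-equivalence class w.r.t.\ $\aleph$, of which there are finitely many. Hence the supremum over $\mathcal{H}$ is really a maximum over a finite collection of representatives $f_1,\dots,f_m$. For each representative the strong law of large numbers gives $\frac{1}{q}\sum_{j=1}^q B_j(f_\ell)\to Q_{\Upsilon,k}(f_\ell)$ almost surely as $q\to\infty$, and since there are only finitely many of them, this convergence holds simultaneously on a single almost-sure event.

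On that event, using continuity of the absolute value and of the maximum over finitely many arguments, I would conclude that $\sup_{f\in\mathcal{H}}|Q_{\aleph,k}(f) - \frac{1}{q\lfloor n/k\rfloor}\sum_{i=1}^q\sum_{\cS\in\mathbf{Y}_i}f(\Upsilon\langle\cS\rangle)|$ converges almost surely to $\sup_{f\in\mathcal{H}}|Q_{\aleph,k}(f)-Q_{\Upsilon,k}(f)|$. The random variables in this sequence all lie in $[0,1]$, so the bounded (dominated) convergence theorem lets me interchange the limit with the outer expectation, yielding $\lim_{q\to\infty}\mathbb{E}[\sup_f|\cdots|] = \mathbb{E}[\sup_f|Q_{\aleph,k}(f)-Q_{\Upsilon,k}(f)|]$, which is exactly the claimed identity and a fortiori the inequality.

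The main obstacle is the rigorous justification of interchanging the supremum, the limit and the expectation; Remark~\ref{remark:finitely_many_hypotheses} is precisely what makes this legitimate, since it collapses the potentially infinite supremum into a finite maximum to which the almost-sure convergence and the bounded convergence arguments apply cleanly. Without this finiteness one would have to control the supremum over an infinite family uniformly, which is exactly the difficulty the later VC-dimension machinery is designed to handle; here it is sidestepped because we only need the statement for the fixed finite global example $\aleph$.
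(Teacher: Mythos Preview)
Your proof is correct and in fact establishes the stronger statement that the two sides are equal. The paper takes a slightly different route: it inserts and subtracts the average $\frac{1}{q\lfloor n/k\rfloor}\sum_{i,\cS}f(\Upsilon\langle\cS\rangle)$ inside the absolute value, applies the triangle inequality, and is then left with the right-hand side plus a ``correction'' term $\mathbb{E}\big[\sup_{f}\big|\frac{1}{q\lfloor n/k\rfloor}\sum_{i,\cS}f(\Upsilon\langle\cS\rangle)-Q_{\Upsilon,k}(f)\big|\big]$, which it shows tends to~$0$ by combining Remark~\ref{remark:finitely_many_hypotheses} with the Hoeffding tail bound from the second part of Lemma~\ref{lemma:many_vectors}, a union bound over the finitely many equivalence classes, and the layer-cake formula $\mathbb{E}[X]=\int_0^1 P[X\ge x]\,dx$.

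Both arguments hinge on exactly the same observation, Remark~\ref{remark:finitely_many_hypotheses}, to collapse the supremum to a finite maximum. The difference is in how the averaging over $j$ is handled: the paper stays with the quantitative Hoeffding bound already available from Lemma~\ref{lemma:many_vectors} and integrates it, whereas you invoke the strong law of large numbers on the conditionally i.i.d.\ block averages and then pass the limit through the expectation via bounded convergence. Your route is shorter, avoids the explicit tail-integration step, and yields equality rather than just the one-sided inequality; the paper's route keeps everything in terms of the explicit exponential bound, which is consistent with the quantitative flavour of the surrounding sections but is not actually needed at this point.
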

\begin{proof}
We have
 \begin{align}
&\mathbb{E}\left[\sup_{f\in \mathcal{H}}\left|Q_{\aleph,k}(f)-Q_{\Upsilon,k}(f)\right|\right] \notag\\
&\quad\quad= \lim_{q\rightarrow \infty}
\mathbb{E}\left[\sup_{f\in \mathcal{H}}\left|Q_{\aleph,k}(f)-\left(\frac{1}{q \left\lfloor \frac{n}{k} \right\rfloor}\sum_{i=1}^q \sum_{\cS \in \mathbf{Y}_i} f(\Upsilon\langle\cS\rangle) \right) \right.\right.  \notag\\
&\quad\quad\quad\quad\quad\quad\left.\left. + \left(\frac{1}{q \left\lfloor \frac{n}{k} \right\rfloor}\sum_{i=1}^q \sum_{\cS \in \mathbf{Y}_i} f(\Upsilon\langle\cS\rangle) \right) - Q_{\Upsilon,k}(f) \right|\right]  \notag\\
&\quad\quad\leq \lim_{q\rightarrow \infty}
\mathbb{E}\left[\sup_{f\in \mathcal{H}}\left|Q_{\aleph,k}(f)-\left(\frac{1}{q \left\lfloor \frac{n}{k} \right\rfloor}\sum_{i=1}^q \sum_{\cS \in \mathbf{Y}_i} f(\Upsilon\langle\cS\rangle) \right) \right|\right]  \notag\\
&\quad\quad\quad\quad\quad\quad+ \lim_{q\rightarrow \infty} \mathbb{E}\left[\sup_{f\in \mathcal{H}}\left|\left(\frac{1}{q \left\lfloor \frac{n}{k} \right\rfloor}\sum_{i=1}^q \sum_{\cS \in \mathbf{Y}_i} f(\Upsilon\langle\cS\rangle) \right) - Q_{\Upsilon,k}(f) \right|\right]\label{last_line}
\end{align}
To finish the proof, we show that the last summand in (\ref{last_line}) is zero. To this end, first note that it follows from Remark \ref{remark:finitely_many_hypotheses} that the supremum only needs to be taken over a finite number $t$ of hypotheses, one from each equivalence class of functions that are equal on all size-$k$ subsets of $\cC_\aleph$. Together with  Lemma \ref{lemma:many_vectors} and the union bound on the finitely many equivalence classes, we find
\begin{align*}
    P\left[\sup_{f\in \mathcal{H}}\left|\left(\frac{1}{q \left\lfloor \frac{n}{k} \right\rfloor}\sum_{i=1}^q \sum_{\cS \in \mathbf{Y}_i} f(\Upsilon\langle\cS\rangle) \right) - Q_{\Upsilon,k}(f) \right| \geq \varepsilon \right] \leq 2 \cdot t \cdot \exp{\left( - 2q \varepsilon^2 \right)}
\end{align*}
Then it follows using $\expect{X} = \int_{0}^1 P[X \geq x] dx$ (assuming $P[X \in [0;1]] = 1$) that
\begin{multline*}
    \mathbb{E}\left[\sup_{f\in \mathcal{H}}\left|\left(\frac{1}{q \left\lfloor \frac{n}{k} \right\rfloor}\sum_{i=1}^q \sum_{\cS \in \mathbf{Y}_i} f(\Upsilon\langle\cS\rangle) \right) - Q_{\Upsilon,k}(f) \right|\right] \leq \int_{0}^1 2 \cdot t \cdot \exp{\left( - 2q x^2 \right)} dx.
\end{multline*}
Finally, noticing that $\lim_{q \rightarrow \infty}\int_{0}^1 2 \cdot t \cdot \exp{\left( - 2q x^2 \right)} dx = 0$ finishes the proof. \qed
\end{proof}

\begin{lemma}\label{lemma:classical_vc}
Suppose $\mathbf{Y}_j = (\cS_{j,1}',\dots,\cS_{j,\lfloor \frac{n}{k} \rfloor}')$ is a random vector sampled as in Lemma \ref{lemma:many_vectors}. Then for any hypothesis class of functions $f : \Omega(\mathcal{L},k) \rightarrow \{0,1\}$ with VC-dimension $d$ we have: 
\[P\left[\sup_{f \in \cal{H}} \left| Q_{\aleph,k}(f) - \frac{1}{\lfloor n/k \rfloor} \sum_{\cS \in \mathbf{Y}_j} f(\Upsilon\langle \cS \rangle)  \right| \ge \varepsilon \right] \le 4 \left(\frac{2e\lfloor n/k \rfloor}{d}\right)^d \exp\left(-\frac{\lfloor n/k \rfloor\varepsilon^2}{8}\right)\]
and 
\[\mathbb{E}\left[\sup_{f \in \cal{H}} \left| Q_{\aleph,k}(f) - \frac{1}{\lfloor n/k \rfloor} \sum_{\cS \in \mathbf{Y}_j} f(\Upsilon\langle \cS \rangle)  \right|\right] \le 2 \sqrt{\frac{2d \log \left(2 e \lfloor n/k \rfloor / d \right)}{\lfloor n/k \rfloor}}\]
\end{lemma}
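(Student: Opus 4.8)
The plan is to reduce both inequalities to the \emph{classical} i.i.d.\ VC bounds by exploiting the distributional equivalence established in Lemma~\ref{lemma:many_vectors}. The key observation is that every set $\cS$ appearing in $\mathbf{Y}_j$ is a subset of $\cC_\Upsilon \subseteq \cC_\aleph$, so that $\Upsilon\langle\cS\rangle = \aleph\langle\cS\rangle$; hence the random quantity inside the supremum depends on $\mathbf{Y}_j$ only through the fragments $\aleph\langle\cS_{j,i}'\rangle$. By the first part of Lemma~\ref{lemma:many_vectors}, $\mathbf{Y}_j$ has the same distribution as the vector $\mathbf{X} = (\cS_1,\dots,\cS_{\lfloor n/k \rfloor})$ of i.i.d.\ uniformly sampled size-$k$ subsets of $\cC_\aleph$. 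Since the statistic $\sup_{f\in\mathcal{H}}|\,\cdots\,|$ is a fixed measurable function of the sampled vector, it suffices to prove both bounds with $\mathbf{Y}_j$ replaced by $\mathbf{X}$, i.e.\ for the empirical average $\frac{1}{\lfloor n/k \rfloor}\sum_{i=1}^{\lfloor n/k \rfloor} f(\aleph\langle\cS_i\rangle)$ over genuinely independent samples.

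After this reduction I would identify the setting with the textbook one. By Definition~\ref{def:probability_of_formula}, $Q_{\aleph,k}(f) = \mathbb{E}_{\cS\sim\textit{Unif}(\cC_\aleph,k)}\left[f(\aleph\langle\cS\rangle)\right]$, so $Q_{\aleph,k}(f)$ is exactly the true mean of the $\{0,1\}$-valued random variable $f(\aleph\langle\cS\rangle)$, while $\frac{1}{m}\sum_i f(\aleph\langle\cS_i\rangle)$, with $m := \lfloor n/k \rfloor$, is its empirical mean over $m$ i.i.d.\ draws. The quantity to control is thus precisely $\sup_{f\in\mathcal{H}}|\,\text{true mean} - \text{empirical mean}\,|$ for a binary hypothesis class of VC-dimension $d$, evaluated on $m$ i.i.d.\ points drawn from the push-forward of $\textit{Unif}(\cC_\aleph,k)$ under $\cS\mapsto\aleph\langle\cS\rangle$. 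Because $\cC_\aleph$ is finite this is an ordinary discrete probability space, so no measure-theoretic care is required and the VC-dimension of $\mathcal{H}$ as a class of functions on $\Omega(\mathcal{L},k)$ transfers verbatim.

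For the tail bound I would invoke the classical two-sided Vapnik--Chervonenkis inequality \cite{vapnik1971uniform}, which bounds the probability by $4\,\Pi_\mathcal{H}(2m)\exp(-m\varepsilon^2/8)$, where $\Pi_\mathcal{H}$ denotes the growth function; applying Sauer's lemma in the form $\Pi_\mathcal{H}(2m)\le(2em/d)^d$ (valid since $d$ is finite) yields exactly the stated bound with $m=\lfloor n/k \rfloor$. For the expectation bound I would run the standard symmetrization argument to pass to the empirical Rademacher complexity of $\mathcal{H}$ (contributing the leading factor $2$), and then bound that complexity via Massart's finite-class lemma together with Sauer's lemma: the number of distinct sign patterns on the sample is at most $(2em/d)^d$ and each pattern vector has Euclidean norm at most $\sqrt{m}$, which gives $\sqrt{2d\log(2em/d)/m}$ and hence $2\sqrt{2d\log(2em/d)/m}$ as in \cite{shalev2014understanding}.

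The only genuine obstacle is the first step: verifying that the reduction via Lemma~\ref{lemma:many_vectors} is legitimate. Two points must be checked, namely (i) that replacing $\Upsilon\langle\cS\rangle$ by $\aleph\langle\cS\rangle$ is valid, which holds because $\cS\subseteq\cC_\Upsilon$, and (ii) that the distributional equivalence of the \emph{vectors} $\mathbf{Y}_j$ and $\mathbf{X}$ yields equality in distribution of the scalar supremum, which is immediate as that supremum is a deterministic function of the vector. Once the reduction is in place the remainder is a direct appeal to off-the-shelf VC results, and the residual work is merely matching constants; in particular, evaluating the growth function at $2m$ (rather than $m$) is what produces the factor $2e m/d$ appearing both under the power in the tail bound and inside the logarithm in the expectation bound.
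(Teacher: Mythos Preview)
Your proposal is correct and follows essentially the same approach as the paper: reduce to the i.i.d.\ setting via the distributional equivalence in Lemma~\ref{lemma:many_vectors} and then invoke the classical VC tail bound \cite{vapnik1971uniform} and the Rademacher/Sauer expectation bound from \cite{shalev2014understanding}. Your write-up is in fact more careful than the paper's one-line proof, since you make explicit the identity $\Upsilon\langle\cS\rangle = \aleph\langle\cS\rangle$ for $\cS\subseteq\cC_\Upsilon$, which is what guarantees the supremum is a function of the vector $\mathbf{Y}_j$ alone and hence inherits the distribution of the corresponding function of $\mathbf{X}$.
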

\begin{proof}
Since $\cS_{j,1}',\dots,\cS_{j,\lfloor \frac{n}{k} \rfloor}'$ are sampled in an i.i.d.\ way, the classical VC inequality applies \cite{vapnik1971uniform}. The expected value bound can be derived from the bound (6.4) in \cite{shalev2014understanding}\footnote{The specific form that we use here can be found in the lecture notes of Philippe Rigollet {\tt https://bit.ly/2H89wPn}.}. \qed
\end{proof}

\noindent We are now ready to prove Theorem \ref{lemma:expected_error}.

\begin{proof}[of Theorem \ref{lemma:expected_error}]
Let $\mathbf{Y}_j = (\cS_{j,1}',\dots,\cS_{j,\lfloor \frac{n}{k} \rfloor}')$, where $1 \leq j \leq q$ for a given integer $q$, be random vectors sampled as in Lemma \ref{lemma:many_vectors}. 
First, using Lemma \ref{lemma:law_of_large_numbers} for the first step, we find
\begin{align}
& \mathbb{E}\left[\sup_{f\in H}\left|Q_{\aleph,k}(f)-Q_{\Upsilon,k}(f)\right|\right] \notag\\
&\quad\quad \le \lim_{q\rightarrow \infty} \mathbb{E}\left[\sup_{f\in H}\left|Q_{\aleph,k}(f)-\frac{1}{q} \sum_{j=1}^q \frac{1}{\lfloor n/k \rfloor} \sum_{\mathcal{S} \in \mathbf{Y}_j} f(\Upsilon\langle\cS\rangle) \right|\right] \notag\\ 
&\quad\quad = \lim_{q\rightarrow \infty} \mathbb{E}\left[ \sup_{f\in H}\left|\frac{1}{q}\sum_{j=1}^q\left( Q_{\aleph,k}(f) -  \frac{1}{\lfloor n/k \rfloor} \sum_{\mathcal{S} \in \mathbf{Y}_j} f(\Upsilon\langle\cS\rangle) \right) \right|\right] \notag\\
&\quad\quad\le \lim_{q\rightarrow \infty} \mathbb{E}\left[\frac{1}{q}\sup_{f\in H}\sum_{j=1}^q\left|Q_{\aleph,k}(f)-  \frac{1}{\lfloor n/k \rfloor} \sum_{\mathcal{S} \in \mathbf{Y}_j} f(\Upsilon\langle\cS\rangle) \right|\right]\notag\\
&\quad\quad\le \lim_{q\rightarrow \infty} \mathbb{E}\left[\frac{1}{q}\sum_{j=1}^q\sup_{f\in H}\left|Q_{\aleph,k}(f)-  \frac{1}{\lfloor n/k \rfloor} \sum_{\mathcal{S} \in \mathbf{Y}_j} f(\Upsilon\langle\cS\rangle) \right|\right]\notag\\
&\quad\quad= \lim_{q\rightarrow \infty} \frac{1}{q} \sum_{j=1}^q \mathbb{E}\left[ \sup_{f\in H} \left|Q_{\aleph,k}(f)-  \frac{1}{\lfloor n/k \rfloor} \sum_{\mathcal{S} \in \mathbf{Y}_j} f(\Upsilon\langle\cS\rangle) \right|\right] \notag\\
&\quad\quad= \mathbb{E}\left[ \sup_{f\in H} \left|Q_{\aleph,k}(f)-  \frac{1}{\lfloor n/k \rfloor} \sum_{\mathcal{S} \in \mathbf{Y}_1} f(\Upsilon\langle\cS\rangle) \right|\right] \label{multline:2}
\end{align}
Note that the last equality is a consequence of Lemma \ref{lemma:many_vectors}, from which it among others follows that all $\mathbf{Y}_j$'s have the same distribution. In other words, all the $q$ expected values are equal. Finally, we can use Lemma \ref{lemma:classical_vc} to bound (\ref{multline:2}) which finishes the proof.
\qed
\end{proof}

\noindent It is also possible to get rid of the logarithmic factor in the bound on expected error. However, as mentioned in \cite{devroye2013probabilistic}, such bounds are worse up to very large training set sizes due to the increased constant factors.

\subsection{From Tail Bounds to Moment-Generating Functions}\label{sec:characteristic_from_tail}

In this section, we derive bounds on the moment-generating function of a random variable from its tail bounds.

\begin{lemma}\label{lemma:characteristic}
For a non-negative random variable $X$, if there exist constants $C\ge e$ and $B> 0$ such that
$$P[X\ge t] \le C \exp(-t^2/B)\qquad \forall t \ge 0,$$
then for any $\lambda > 0$
$$\expect{\exp{\left( \lambda X \right)}} \leq 1 + \lambda C \sqrt{\pi B} \exp{\left(\frac{\lambda^2 B}{4} \right)}$$
\end{lemma}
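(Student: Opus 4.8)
The plan is to compute $\expect{\exp(\lambda X)}$ via the tail-integral (layer-cake) identity and then reduce everything to a single Gaussian integral. Write $Y = \exp(\lambda X)$, which is non-negative, so $\expect{Y} = \int_0^\infty P[Y \ge s]\, ds$. Since $X \ge 0$ and $\lambda > 0$, we have $Y \ge 1$ almost surely, hence $P[Y \ge s] = 1$ for all $s \in [0,1]$; this contributes exactly the additive constant $1$ in the target bound, and it remains to control $\int_1^\infty P[Y \ge s]\, ds$.

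The key step is the substitution $s = \exp(\lambda t)$ (so that $t = (\ln s)/\lambda$ and $ds = \lambda \exp(\lambda t)\, dt$), which turns the remaining integral into $\lambda \int_0^\infty P[X \ge t]\, \exp(\lambda t)\, dt$. Applying the hypothesis $P[X \ge t] \le C \exp(-t^2/B)$ pointwise (which is valid even where the right-hand side exceeds $1$, since $P[X\ge t] \le 1$ always) bounds this by $\lambda C \int_0^\infty \exp(-t^2/B + \lambda t)\, dt$.

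Completing the square in the exponent, $-t^2/B + \lambda t = -\frac{1}{B}(t - \lambda B/2)^2 + \lambda^2 B/4$, factors out $\exp(\lambda^2 B/4)$ and leaves a shifted Gaussian integral over $[0,\infty)$. Bounding this from above by the integral over all of $\mathbb{R}$ gives $\int_{-\infty}^\infty \exp(-(t-\lambda B/2)^2/B)\,dt = \sqrt{\pi B}$, and assembling the pieces yields $\expect{\exp(\lambda X)} \le 1 + \lambda C \sqrt{\pi B}\exp(\lambda^2 B/4)$, exactly as claimed.

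I do not expect a serious obstacle here: the computation is routine once the layer-cake identity and the change of variables are set up correctly, and the only points requiring care are (i) isolating the constant $1$ from the $Y \ge 1$ part and (ii) enlarging the half-line Gaussian integral to the full line to obtain the clean closed form. I also note that this argument does not actually invoke the hypothesis $C \ge e$, since the full-line Gaussian bound is applied regardless; that assumption therefore appears to be stated because it holds in the intended application rather than because it is needed for the derivation itself.
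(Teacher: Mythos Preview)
Your proof is correct. Both arguments reach the same key integral
\[
\mathbb{E}\!\left[e^{\lambda X}\right] \;\le\; 1 + C\lambda \int_0^\infty e^{-t^2/B}\, e^{\lambda t}\, dt,
\]
complete the square, and dominate the half-line Gaussian integral by the full line to obtain $\sqrt{\pi B}$. The difference lies only in how this integral is reached: the paper Taylor-expands $e^{\lambda X}$, bounds each moment $\mathbb{E}[X^p]$ via the layer-cake formula applied to $X^p$, and then invokes Fubini--Tonelli to interchange the sum over $p$ with the integral; you instead apply the layer-cake identity directly to $Y=e^{\lambda X}$ and change variables $s=e^{\lambda t}$. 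Your route is shorter and avoids the explicit moment bounds and the Fubini justification, while the paper's route has the side benefit of exhibiting moment estimates along the way (though these are not used elsewhere). Your observation that the hypothesis $C\ge e$ is never invoked is also accurate; the paper's argument does not use it either.
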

\begin{proof}
We have:
\begin{align*}
	\mathbb{E}\left[ X^{p} \right]
	&=
	\int_{0}^{\infty} \mathrm{P}\left( {X}^{p} \ge u\right) du
	=
	\int_{0}^{\infty} \mathrm{P}\left( {X}^{p} \ge t^{p}\right) \cdot {p} \cdot t^{p-1} dt
	\\
	&=
	\int_{0}^{\infty} \mathrm{P}\left( {X} \ge t\right) \cdot {p} \cdot {t^{p-1}} dt
	\le
	\int_{0}^{\infty}
		C \cdot e^{-t^{2}/B} \cdot {p} \cdot {t^{p-1}} dt
\end{align*}
Next, for the moment-generating function, we have
\begin{align*}
    \expect{\exp{\left( \lambda X \right)}} &\leq 1  + \sum_{p = 1}^\infty \frac{\lambda^p \expect{X^p}}{p!} \leq
    1  + \sum_{p = 1}^\infty \frac{\lambda^p \int_{0}^{\infty} C \cdot e^{-t^{2}/B} \cdot {p} \cdot {t^{p-1}} dt}{p!} \\
    &\leq 1  + C \int_{0}^{\infty} e^{-t^{2}/B} \cdot \sum_{p = 1}^\infty \frac{\lambda^p   {p} \cdot {t^{p-1}} }{p!} dt \\
    &= 1  + C \lambda \int_{0}^{\infty} e^{-t^{2}/B} \cdot \sum_{p = 0}^\infty \frac{\lambda^p  \cdot {t^{p}} }{p!} dt \\
    &= 1  + C \lambda \int_{0}^{\infty} e^{-t^{2}/B} \cdot e^{t \lambda} dt
    = 1  + C \lambda \int_{0}^{\infty} e^{-\frac{\left(t - \frac{1}{2} \lambda B \right)^2}{B} + \frac{\lambda^2 B^2}{4}} dt \\
    &= 1  + C \lambda e^{\frac{\lambda^2 B^2}{4} } \int_{0}^{\infty} e^{-\frac{\left(t - \frac{1}{2} \lambda B \right)^2}{B}} dt\\
    &= 1  + \frac{1}{2} C \lambda  \sqrt{\pi B}  e^{\frac{\lambda^2 B^2}{4} } \left( \operatorname{erf}\left( \frac{\lambda \sqrt{B}}{2} \right)  + 1 \right) \\
    &\leq 1 + C \lambda \sqrt{\pi B} \exp{\left(\frac{\lambda^2 B}{4} \right)}
\end{align*}
Note that it is easy to check that all the series in the above derivation converge absolutely. {The} Fubini-Tonelli theorem justifies the change of order of summation and integration. 
\qed
\end{proof}

\subsection{From Moment-Generating Functions to Tail Bounds}\label{sec:tail_from_characteristic}

We can now finish the proof of our main result, Theorem \ref{thm:thm_main}.

\begin{proof}[of Theorem \ref{thm:thm_main}]
Let $\mathbf{Y}_j = (\cS_{j,1}',\dots,\cS_{j,\lfloor \frac{n}{k} \rfloor}')$, for $1 \leq j \leq q$, be random vectors sampled as in Lemma \ref{lemma:many_vectors}. For convenience, let us also define
$$R_{\Upsilon}^{(q)}(f) = \frac{1}{q} \sum_{j=1}^q \frac{1}{\lfloor n/k \rfloor} \sum_{\mathcal{S} \in \mathbf{Y}_j} f(\Upsilon\langle\cS\rangle).$$
First, we have
\begin{multline*}
P\left[\sup_{f \in \cal{H}} \left| Q_{\aleph,k}(f) - Q_{\Upsilon,k}(f)  \right| \ge \varepsilon \right] \\
= P\left[\sup_{f \in \cal{H}} \left\{ \left| Q_{\aleph,k}(f) - R_{\Upsilon}^{(q)}(f) + R_{\Upsilon}^{(q)}(f) -  Q_{\Upsilon,k}(f) \right|  \right\} \ge \varepsilon \right]\\
\leq P\left[\sup_{f \in \cal{H}} \left\{ \left|  Q_{\aleph,k}(f) - R_{\Upsilon}^{(q)}(f)\right| + \left|R_{\Upsilon}^{(q)}(f) -  Q_{\Upsilon,k}(f)   \right| \right\} \ge \varepsilon \right] \\
\leq P\left[\sup_{f \in \cal{H}} \left\{ \left|  Q_{\aleph,k}(f) - R_{\Upsilon}^{(q)}(f)\right|\right\} + \sup_{f \in \cal{H}} \left\{ \left|R_{\Upsilon}^{(q)}(f) -  Q_{\Upsilon,k}(f)   \right| \right\} \ge \varepsilon \right]
\end{multline*}
It follows from the fact that the supremum needs to be taken only over the finitely many equivalence classed of $\mathcal{H}$ on $\aleph$ and from Lemma \ref{lemma:many_vectors} (see the discussion in the proof of Lemma 5) that for any $\varepsilon^* > 0$ and $\delta^* > 0$ there is an integer $q_0$ such that for all $q \geq q_0$:
$$P\left[\sup_{f \in \cal{H}} \left\{ \left|R_{\Upsilon}^{(q)}(f) -  Q_{\Upsilon,k}(f)   \right| \right\} \ge \varepsilon^* \right] \leq \delta^*.$$
Hence, for any $\varepsilon^* > 0$, $\delta^* > 0$ and a suitably large $q \geq q_0$ we have
\begin{multline*}
P\left[\sup_{f \in \cal{H}} \left\{ \left|  Q_{\aleph,k}(f) - R_{\Upsilon}^{(q)}(f)\right|\right\} + \sup_{f \in \cal{H}} \left\{ \left|R_{\Upsilon}^{(q)}(f) -  Q_{\Upsilon,k}(f)   \right| \right\} \ge \varepsilon \right] \\
\leq P\left[\sup_{f \in \cal{H}} \left\{ \left|  Q_{\aleph,k}(f) - R_{\Upsilon}^{(q)}(f)\right|\right\} \ge \varepsilon-\varepsilon^* \right] + \delta^*.
\end{multline*}
Taking the limit $q_0 \rightarrow \infty$ we obtain
\begin{multline*}
P\left[\sup_{f\in \mathcal{H}}\left|Q_{\aleph,k}(f)-Q_{\Upsilon,k}(f)\right| \geq \varepsilon \right]
\leq \lim_{q\rightarrow \infty} P\left[\sup_{f\in \mathcal{H}}\left|Q_{\aleph,k}(f)-R^{(q)}_{\Upsilon}(f) \right| \geq \varepsilon \right]
\end{multline*}
Next we need to bound the right-hand side of the above inequality. For any $q$ we have
\begin{align*}
    &P\left[\sup_{f\in \mathcal{H}}\left|Q_{\aleph,k}(f)-R^{(q)}_{\Upsilon}(f) \right| \geq \varepsilon \right] \\
    &\quad\quad\quad\quad = P\left[\sup_{f \in \mathcal{H}}\left|Q_{\aleph,k}(f) -\frac{1}{q} \sum_{j=1}^q \frac{1}{\lfloor n/k \rfloor} \sum_{\mathcal{S} \in \mathbf{Y}_j} f(\Upsilon\langle\cS\rangle) \right|  \geq \varepsilon \right] \\
    &\quad\quad\quad\quad= P\left[\sup_{f \in \mathcal{H}}\left|\frac{1}{q} \sum_{j=1}^q \left(Q_{\aleph,k}(f)  - \frac{1}{\lfloor n/k \rfloor} \sum_{\mathcal{S} \in \mathbf{Y}_j} f(\Upsilon\langle\cS\rangle) \right) \right|  \geq \varepsilon \right] \\
    &\quad\quad\quad\quad \leq P\left[\sup_{f \in \mathcal{H}}\frac{1}{q} \sum_{j=1}^q \left| Q_{\aleph,k}(f)  - \frac{1}{\lfloor n/k \rfloor} \sum_{\mathcal{S} \in \mathbf{Y}_j} f(\Upsilon\langle\cS\rangle) \right| \geq \varepsilon \right] \\
    &\quad\quad\quad\quad \leq P\left[\frac{1}{q} \sum_{j=1}^q \sup_{f \in \mathcal{H}} \left| Q_{\aleph,k}(f)  - \frac{1}{\lfloor n/k \rfloor} \sum_{\mathcal{S} \in \mathbf{Y}_j} f(\Upsilon\langle\cS\rangle) \right| \geq \varepsilon \right]
\end{align*}
Let us denote
$$T_j = \sup_{f \in \mathcal{H}} \left| Q_{\aleph,k}(f)  - \frac{1}{\lfloor n/k \rfloor} \sum_{\mathcal{S} \in \mathbf{Y}_j} f(\Upsilon\langle\cS\rangle) \right|.$$
Combining Lemma \ref{lemma:classical_vc} and Lemma \ref{lemma:characteristic}, we can bound $\expect{\exp{\left(\lambda T_j\right)} }$ as
\begin{align*}
    \expect{\exp{\left( \lambda T_j \right)}} \leq 1 + 4 \lambda \sqrt{\frac{8 \pi}{\lfloor n/k \rfloor}} \left(\frac{2e \lfloor n/k \rfloor}{d} \right)^d \exp{\left( \frac{2 \lambda^2}{\lfloor n/k \rfloor} \right)}.
\end{align*}
Let us denote $T = \frac{1}{q} \sum_{j=1}^q T_j$. We use the observation from \cite{hoeffding1963probability} that due to Jensen's inequality and linearity of expectation
\begin{align*}
    \expect{\exp{(\lambda T)}} \leq \frac{1}{q} \sum_{j=1}^q \expect{\exp{(\lambda T_j)}} = \expect{\exp{(\lambda T_1)}}.
\end{align*}
Next we obtain a bound on $P[T \geq \varepsilon]$ from the bound on $\expect{\exp{(\lambda T)}}$. In particular, for positive $\lambda$, we have
\begin{align*}
    P[T \geq \varepsilon] &= P[e^{\lambda \cdot X} \geq e^{\lambda \cdot \varepsilon}]
    \leq e^{-\lambda \cdot \varepsilon} \expect{e^{\lambda \cdot T}} \\
    &\leq e^{-\lambda \cdot \varepsilon} \left( 1 + 4 \lambda \sqrt{\frac{8 \pi}{\lfloor n/k \rfloor}} \left(\frac{2e \lfloor n/k \rfloor}{d} \right)^d \exp{\left( \frac{2 \lambda^2}{\lfloor n/k \rfloor} \right)} \right).
\end{align*}
where the Markov inequality was used for the third step.
Since the above bound holds for any $q$, it also holds in the limit.
Next, we can plug in $\lambda := \frac{\varepsilon \cdot \lfloor n/k \rfloor}{4}$ and obtain:
\begin{align*}
    P[T \geq \varepsilon]
    \leq \exp{\left(- \frac{\lfloor n/k \rfloor \varepsilon^2}{4}\right)} + \varepsilon \sqrt{8 \pi \lfloor n/k \rfloor} \left(\frac{2e \lfloor n/k \rfloor}{d} \right)^d \cdot \exp{\left( - \frac{\lfloor n/k \rfloor \varepsilon^2}{8} \right)}.
\end{align*}
\qed
\end{proof}





\section{Concluding Remarks}

We have derived VC-dimension based bounds which can be applied in relational learning settings where one may assume that the training data ({i.e.\ some given} relational structure) was obtained from {a} larger relational structure by sampling without replacement. {This includes many of the typical application settings in which, for instance, Markov logic networks are used. The considered bounds are useful, among others, for the analysis of} relational marginal problems \cite{kuzelka2018relational} and PAC-reasoning in relational domains \cite{kuzelka2018arxivPACReasoning}.

There are several interesting {avenues for future work}. First, in this paper, we have not studied the realizable learning case for which, at least in the classical i.i.d. case, one can obtain faster convergence rates. It would be interesting to extend our results into the realizable case. {Similarly, it would be of interest to study} bounds under low-noise conditions \cite{clemenccon2008ranking}, {which} sit somewhere between the realizable case and the case studied in this paper. {Another} natural direction for future work would be to extend the PAC-Bayesian setting into relational learning, {as the bounds that are derived in this setting tend to be tighter in practice \cite{langford2003pac}}.

\bibliographystyle{splncs04}
\bibliography{ref}


\appendix

\section{Omitted Proofs}

\begin{lemma}[Ku\v{z}elka et al. \cite{kuzelka2018relational}]\label{lemma:crazy1} 
Let $\aleph = (\cA_\aleph, \cC_\aleph)$ be a global example. Let $0 \leq n \leq |\cC_\aleph|$ and $0 \leq k \leq n$ be integers. Let $\mathbf{X} = (\cS_1,\cS_2,\dots,\cS_{\lfloor \frac{n}{k} \rfloor})$ be a vector of subsets of $\cC_\aleph$, each sampled uniformly and independently of the others from all size-$k$ subsets of $\cC_\aleph$. Next let $\mathcal{I}' = \{ 1,2,\dots,|\cC_\aleph| \}$ and let $\mathbf{Y} = (\cS_1',\cS_2',\dots,\cS_{\lfloor \frac{n}{k} \rfloor}')$ be a vector sampled by the following process:
\begin{enumerate}
    \item Sample $\mathcal{C}_\Upsilon$ uniformly from all size-$n$ subsets of $\mathcal{C}_\aleph$.
    \item Sample subsets $\mathcal{I}_1',\dots,\mathcal{I}_{\lfloor \frac{n}{k} \rfloor}'$ of size $k$ from $\mathcal{I}'$.
    \item Sample an injective function $g : \bigcup_{i=1}^{\lfloor n/k \rfloor} \mathcal{I}_i' \rightarrow \cC_\Upsilon$ uniformly from all such functions.
    \item Define $\cS_i' = g(\mathcal{I}_i')$ for all $0 \leq i \leq \lfloor \frac{n}{k} \rfloor$.
\end{enumerate}
Then $\mathbf{X}$ and $\mathbf{Y}$ have the same distribution.
\end{lemma}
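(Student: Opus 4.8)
The plan is to show that both $\mathbf{X}$ and $\mathbf{Y}$ are distributed as $\lfloor n/k \rfloor$ independent, uniformly random size-$k$ subsets of $\cC_\aleph$. For $\mathbf{X}$ this holds by construction, so the entire argument concerns $\mathbf{Y}$. I will write $N = |\cC_\aleph|$ and $L = \lfloor n/k \rfloor$, take the index sets $\mathcal{I}_1',\dots,\mathcal{I}_L'$ to be i.i.d.\ uniform over size-$k$ subsets of $\mathcal{I}'$ (matching how the $\cS_i$ are drawn in $\mathbf{X}$), and set $U = \bigcup_{i=1}^{L} \mathcal{I}_i'$ with $m = |U|$. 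Since $m \le kL \le n \le N$, all the injections appearing below exist.

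The heart of the argument is to understand the law of the relabeling map $g$ once the auxiliary set $\cC_\Upsilon$ has been integrated out. I would first condition on the index sets $\mathcal{I}_1',\dots,\mathcal{I}_L'$, hence on $U$, and claim that the injection $g : U \to \cC_\aleph$ produced by steps~1 and~3 is uniformly distributed over all injections from $U$ into $\cC_\aleph$. The clean way to see this is by symmetry: for any permutation $\sigma$ of $\cC_\aleph$, the pair $(\sigma(\cC_\Upsilon),\, \sigma \circ g)$ has the same law as $(\cC_\Upsilon, g)$, because $\sigma(\cC_\Upsilon)$ is again uniform over size-$n$ subsets and, conditionally on it, $\sigma \circ g$ is again a uniform injection into it. Thus the law of $g$ is invariant under post-composition by the permutation group of $\cC_\aleph$, and since that group acts transitively on injections $U \to \cC_\aleph$, the law must be uniform. (Alternatively, one can verify this by the direct count $P[g = h] = \binom{N-m}{n-m}\big/\big(\binom{N}{n}(n)_m\big) = 1/(N)_m$, which is independent of the target injection $h$.) I expect this step to be the main obstacle: it is precisely where sampling \emph{without} replacement is used, and one must check that forcing $g$ to land in the size-$n$ set $\cC_\Upsilon$ introduces no bias after averaging over $\cC_\Upsilon$.

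With this in hand, I would replace $g$ by the restriction $\pi|_U$ of a uniformly random bijection $\pi : \mathcal{I}' \to \cC_\aleph$. This is legitimate precisely because $|\mathcal{I}'| = |\cC_\aleph| = N$ and because the restriction of a uniform bijection to $U$ is a uniform injection (each injection is extended to a bijection in exactly $(N-m)!$ ways, so $P[\pi|_U = h] = 1/(N)_m$ for every $h$), which matches the law of $g$ obtained above. Crucially, $\pi$ can be chosen independent of the index sets $\mathcal{I}_i'$, so that $\cS_i' = g(\mathcal{I}_i') = \pi(\mathcal{I}_i')$.

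Finally, I would condition on $\pi$ and push the remaining randomness through. Since $\pi$ is a bijection, the map $A \mapsto \pi(A)$ is a bijection between size-$k$ subsets of $\mathcal{I}'$ and size-$k$ subsets of $\cC_\aleph$; as the $\mathcal{I}_i'$ are i.i.d.\ uniform over the former, the images $\pi(\mathcal{I}_1'),\dots,\pi(\mathcal{I}_L')$ are i.i.d.\ uniform over the latter. Because this conditional law does not depend on $\pi$, it is also the unconditional law of $\mathbf{Y}$, which is exactly the law of $\mathbf{X}$. This completes the argument.
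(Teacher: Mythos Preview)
Your argument is correct and takes a genuinely different route from the paper. The paper computes $P[\mathbf{Y}=(S_1,\dots,S_t)]$ by brute-force enumeration: it sums over all choices of $\cC_\Upsilon$, all tuples $(I_1,\dots,I_t)$ of index sets, and all injections $h$, then simplifies the resulting product of binomial coefficients to obtain $\binom{|\cC_\aleph|}{k}^{-t}$, which matches the law of $\mathbf{X}$. You instead isolate the structural reason the result holds: after averaging out $\cC_\Upsilon$, the relabeling $g$ is a uniform injection $U\to\cC_\aleph$, and hence can be realized as the restriction of a uniform bijection $\pi:\mathcal{I}'\to\cC_\aleph$ taken \emph{independent} of the index sets; pushing i.i.d.\ uniform size-$k$ subsets through a fixed bijection then yields i.i.d.\ uniform size-$k$ subsets of $\cC_\aleph$. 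Your symmetry argument (invariance of the law of $g$ under post-composition by permutations of $\cC_\aleph$, together with transitivity of that action on injections) is the clean conceptual core, and the counting check $P[g=h]=\binom{N-m}{n-m}/\bigl(\binom{N}{n}(n)_m\bigr)=1/(N)_m$ is exactly the identity the paper's enumeration is implicitly verifying. What your approach buys is clarity about \emph{why} the auxiliary set $\cC_\Upsilon$ introduces no bias, and it avoids tracking the ``isomorphism type'' of $(I_1,\dots,I_t)$ that the paper's count handles somewhat implicitly; the paper's approach, by contrast, is entirely self-contained and requires no coupling construction.
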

\begin{proof}
For convenience, let us define some additional notation. We set $t = \lfloor n/k \rfloor$ and we define $\textit{Inj}(\mathcal{J},\mathcal{D})$ to be the set of all injective functions from $\mathcal{J}$ to $\mathcal{D}$. We also set $r = \left| \bigcup_{i=1}^{t} \mathcal{S}_i \right|$. For a set $\mathcal{B}$, $[\mathcal{B}]^n$ will denote the set of all size-$n$ subsets of $\mathcal{B}$. Then we have
{\small
\begin{align*}
   P[\mathbf{Y} = (S_1,\dots,S_t)] =  \sum_{C_\Upsilon \in [\cC_\aleph]^n} \frac{1}{|[\cC_\aleph]]^n|} \sum_{I_1 \in [\mathcal{I}]^k}\dots\sum_{I_t \in [\mathcal{I}]^k} \left(\frac{1}{|[\mathcal{I}]^k|}\right)^t \\
   \cdot
    \sum_{h \in \textit{Inj}(\bigcup_{j=1}^t I_j, C_\Upsilon)} \frac{1}{|\textit{Inj}(\bigcup_{j=1}^t I_j, c_\Upsilon)|}
    \cdot \mathds{1}\left( h(I_1) = S_1 \wedge \dots \wedge h(I_t) = S_t \right) \\
    = \frac{1}{|[\cC_\aleph]^n| |[\mathcal{I}]^k|^t} \sum_{C_\Upsilon \in [\cC_\aleph]]^n}  \sum_{I_1 \in [\mathcal{I}]^k} \dots \sum_{I_t \in [\mathcal{I}]^k}
    \sum_{h \in \textit{Inj}(\bigcup_{j=1}^t I_j, C_\Upsilon)} \frac{\mathds{1}\left( \bigwedge_{j=1}^{t} I_j = h^{-1}(S_j) \right)}{|\textit{Inj}(\bigcup_{j=1}^t I_j, C_\Upsilon)|}.
\end{align*}}
\noindent Next we need to analyze the above expression. First, we may notice that whenever
$$\mathds{1}\left( \bigwedge_{j=1}^{t} i_j = h^{-1}(s_j) \right) = 1,$$
we will have $\left| \textit{Inj}(\bigcup_{j=1}^t I_j, C_\Upsilon) \right| = \left( \begin{array}{c} n \\ r \end{array} \right) r!$.

Next we need to evaluate the sum
\begin{align*}
\sum_{C_\Upsilon \in [\cC_\aleph]]^n}  \sum_{I_1 \in [\mathcal{I}]^k} \dots \sum_{I_t \in [\mathcal{I}]^k}
    \sum_{h \in \textit{Inj}(\bigcup_{j=1}^t I_j, C_\Upsilon)} \mathds{1}\left( \bigwedge_{j=1}^{t} I_j = h^{-1}(S_j) \right).
\end{align*}
If the sum
\begin{align*}
\sum_{I_1 \in [\mathcal{I}]^k} \dots \sum_{I_t \in [\mathcal{I}]^k}
    \sum_{h \in \textit{Inj}(\bigcup_{j=1}^t I_j, C_\Upsilon)} \mathds{1}\left( \bigwedge_{j=1}^{t} I_j = h^{-1}(S_j) \right).
\end{align*}
is non-zero for a $\cC_\Upsilon$ then it must also be true for any $\cC_\Upsilon'$ obtained by replacing elements in $\cC_\Upsilon \setminus \cup_{i=1}^{t} S_i$ by some other elements of $\cC_\aleph$; this will give us a factor $$\left( \begin{array}{c} |\cC_\aleph|-r \\ n-r \end{array} \right).$$

For a given $(I_1,I_2,\dots,I_t)$, there is exactly one
$$h \in \textit{Inj}(\bigcup_{j=1}^t I_j, C_\Upsilon)$$
such that
$$\mathds{1}( \bigwedge_{j=1}^{t} i_j = h^{-1}(s_j)) = 1$$
whenever $(I_1,I_2,\dots,I_t)$ is isomorphic to $(S_1,S_2,\dots,S_t)$, i.e. whenever there is a bijection between $\bigcup_{j=1}^t I_j$ and $\bigcup_{j=1}^t S_j$ that preserves the set structure. Hence, to obtain all such $(I_1',I_2',\dots,I_t')$ we can take all injective functions from one fixed $\cup_{j=1}^t I_j$ to $\mathcal{I}$ and apply them on it. There are $$\left(\begin{array}{c} |\cC_\aleph| \\ r \end{array} \right) r!$$ such functions.

Putting all of the above together we get
\begin{align*}
    P[\mathbf{Y} = (S_1,\dots,S_t)] \\
    = \left(\begin{array}{c} |\cC_\aleph| \\ n \end{array} \right)^{-1} \cdot \left(\begin{array}{c} |\mathcal{I}| \\ k \end{array} \right)^{-t} \cdot \left(\begin{array}{c} n \\ r \end{array} \right)^{-1}(r!)^{-1} \cdot \left(\begin{array}{c} |\cC_\aleph| \\ r \end{array} \right) r! \cdot \left( \begin{array}{c} |\cC_\aleph|-r \\ n-r \end{array} \right) \\
    = \left(\begin{array}{c} |\mathcal{I}| \\ k \end{array} \right)^{-t} = \left(\begin{array}{c} |\mathcal{C}_\aleph| \\ k \end{array} \right)^{-t}.
\end{align*}
This is the same as the probability of $P[\mathbf{X} = (S_1,\dots,S_t)]$, which finishes the proof. \qed
\end{proof}


\bibliographystyle{plain}
\bibliography{ref}

\end{document}